\def\eqref#1{equation~\ref{#1}}
\def\1{\bm{1}}
\DeclareMathAlphabet{\mathsfit}{\encodingdefault}{\sfdefault}{m}{sl}
\SetMathAlphabet{\mathsfit}{bold}{\encodingdefault}{\sfdefault}{bx}{n}
\definecolor{softblue}{RGB}{135, 206, 235}
\definecolor{calmblue}{RGB}{100, 149, 237}
\definecolor{steelblue}{RGB}{70, 130, 180}
\definecolor{cornellred}{rgb}{0.7, 0.11, 0.11}
\definecolor{cadmiumgreen}{rgb}{0.0, 0.42, 0.24}
\theoremstyle{plain}
\newtheorem{theorem}{Theorem}[section]
\newtheorem{lemma}[theorem]{Lemma}
\theoremstyle{definition}
\theoremstyle{remark}
\newmdenv[align=center, shadow=true, shadowsize=1pt]{important}
\newtheoremstyle{case}{}{}{}{}{}{:}{ }{}
\theoremstyle{case}
\newtheorem{case}{Case}
\title{Understanding Self-supervised Contrastive Learning through Supervised Objectives}
\author{\name Byeongchan Lee \email prinsommer@kaist.ac.kr \\
      \addr KAIST}
\begin{document}

\maketitle

\begin{abstract}
Self-supervised representation learning has achieved impressive empirical success, yet its theoretical understanding remains limited. In this work, we provide a theoretical perspective by formulating self-supervised representation learning as an approximation to supervised representation learning objectives. Based on this formulation, we derive a loss function closely related to popular contrastive losses such as InfoNCE, offering insight into their underlying principles. Our derivation naturally introduces the concepts of prototype representation bias and a balanced contrastive loss, which help explain and improve the behavior of self-supervised learning algorithms. We further show how components of our theoretical framework correspond to established practices in contrastive learning. Finally, we empirically validate the effect of balancing positive and negative pair interactions. All theoretical proofs are provided in the appendix, and our code is included in the supplementary material.
\end{abstract}



\section{Introduction}
\label{sec:introduction}

Representation learning, the process of acquiring condensed but meaningful representations \citep{bengio2013representation, lecun2015deep, goodfellow2016deep}, lies at the core of advancing machine learning capabilities. Supervised learning, while effective, depends heavily on labeled data, which can be problematic in the face of diverse and dynamic real-world environments. Human annotation is not only labor-intensive and costly (hard to scale), but also subjective and prone to errors (hard to generalize) \citep{vasudevan2022does, beyer2020we, shankar2020evaluating}. 

In response to these challenges, self-supervised learning (SSL), motivated by the idea that humans can learn without explicit labels, has shown strong empirical success in domains such as computer vision, natural language processing, and speech recognition \citep{ozbulak2023know, schiappa2023self, gui2023survey}. While supervised learning is built on well-defined objectives such as empirical risk minimization, self-supervised learning has mainly progressed through architectural innovations, rather than starting from formal objective formulations. Many recent methods adopt a Siamese architecture and combine various techniques such as memory banks, momentum encoders, stop-gradient operations, and multi-view augmentations \citep{wu2018unsupervised, he2020momentum, grill2020bootstrap, chen2021exploring, caron2020unsupervised, caron2021emerging, zbontar2021barlow, amrani2022self}. 

In this paper, we present a theoretical framework that interprets self-supervised representation learning as an approximation of supervised representation learning. While self-supervised representation learning operates without ground-truth labels, it implicitly constructs supervision signals, suggesting an underlying connection to supervised representation learning objectives.\footnote{This is implied within expressions such as pseudo labels \citep{doersch2015unsupervised, noroozi2016unsupervised, zhang2016colorful, gidaris2018unsupervised}, target (or teacher) encoders \citep{tarvainen2017mean, he2020momentum, grill2020bootstrap, chen2021exploring, caron2021emerging, oquab2023dinov2} in the literature.} To explore this connection, we begin by expressing supervised representation learning as an optimization over similarities to class prototypes. We then approximate this formulation using only unlabeled data and data augmentations, leading to a self-supervised loss that closely resembles the InfoNCE loss used in SimCLR~\citep{chen2020simple}, which serves as a hub for many algorithms. This derivation clarifies how self-supervised representation learning can be understood as solving a surrogate form of supervised representation learning. Additionally, our formulation naturally introduces the concept of \textit{prototype representation bias}, and motivates a \textit{balanced contrastive loss} that improves the approximation. These insights offer a more principled understanding of self-supervised representation learning and its relationship to supervised objectives.

\textbf{Contributions} of our work are summarized as follows:
\begin{enumerate}
\item We present a theoretical framework that formulates self-supervised representation learning as an approximation of supervised representation learning. From this formulation, we derive a contrastive loss closely related to the InfoNCE loss, providing a principled explanation for its structure.
\item Our framework offers a perspective on common practices in contrastive learning, such as representation normalization and the use of balanced datasets.
\item We introduce the concept of prototype representation bias arising from the approximation, and observe its correlation with downstream performance.
\item We propose a balanced contrastive loss as a natural extension of the InfoNCE loss, and observe that improved balancing leads to better performance.

\end{enumerate}


\section{Related work}
\label{sec:related_work}

\paragraph{Contrastive losses} Our work falls into the category of contrastive learning, which is characterized by the use of contrastive losses. The concept of contrastive loss was first introduced in \citet{chopra2005learning}. Since then, several variants have emerged. The triplet loss simultaneously considers three representations, each serving as an anchor, a positive sample, and a negative sample \citep{weinberger2009distance, chechik2010large}. Furthermore, the ($m+1$)-tuplet loss treats $m+1$ representations: an anchor, a positive sample, and $m-1$ negative samples, and it is composed in the form of a softmax function \citep{sohn2016improved}. \citet{wu2018unsupervised} combine a temperature parameter and proximal regularization to have the noise-contrastive estimation (NCE) loss. The NT-Xent loss (equivalently, the InfoNCE loss \citep{oord2018representation}) is obtained by constructing a cross-entropy form loss using $2m$ augmented images from a minibatch of $m$ images \citep{chen2020simple}. \citet{yeh2022decoupled} remove the coupling between positive and negative terms in the NT-Xent loss. Some works adaptively scale the temperature parameter \citep{huang2023model, manna2025dynamically, kukleva2023temperature}.
 In \citet{khosla2020supervised}, the concept of contrastive loss is applied in reverse to the supervised setting. Several studies analyze contrastive losses by decomposing them into an attracting term and a repelling term. \citet{wang2020understanding} show that contrastive losses asymptotically promote alignment and uniformity in representations. \citet{manna2021mio} improve performance by removing the positive–positive repulsion term and replacing the negative term with its exponential upper bound. Our work aims to help understand contrastive losses by showing how they can be derived as approximations of supervised learning objectives.

\paragraph{Perspectives on SSL} There have been attempts to interpret contrastive learning within different conceptual frameworks. There is an approach that provides unified views bridging contrastive learning and covariance-based learning \citep{huang2021towards, garrido2022duality, lee2021predicting, balestriero2022contrastive, tian2020understanding, zhang2024geometric}. There is another approach that interprets contrastive learning as maximizing the mutual information of positive pairs \citep{hjelm2018learning, oord2018representation, bachman2019learning, wang2020understanding, li2021self, aitchison2024infonce}. \cite{haochen2021provable} views self-supervised learning as learning spectral embeddings of an augmentation graph. Beyond these analytical views, some works frame self-supervised learning from more functional viewpoints, such as clustering \citep{caron2020unsupervised}, bootstrapping \citep{grill2020bootstrap}, semi-supervised learning \citep{chen2020big}, or knowledge distillation \citep{caron2021emerging, oquab2023dinov2}. The idea of supervision is often alluded to in various approaches. We explore how self-supervised learning can be more explicitly connected to supervised learning through a principled formulation.


\section{Problem formulation}
\label{sec:problem_formulation}

In this section, we first formulate a supervised representation learning problem as an optimization problem, followed by its self-supervised counterpart. Throughout the paper, we use uppercase letters to denote random elements, lowercase letters to denote non-random elements (including realizations of the random elements), and calligraphic letters to denote sets.


\begin{wrapfigure}{r}{0.5\textwidth}
\vspace{-15pt}
  \begin{center}
    \includegraphics[width=0.48\textwidth]{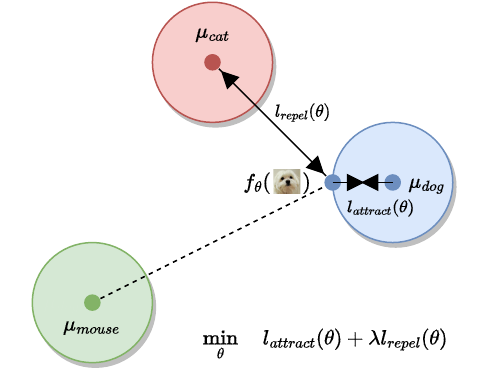}
  \end{center}
  \caption{\textbf{Supervised learning as an optimization.} The loss $l_{\text{attract}}(\theta)$ encourages the image representation to attract the prototype representation $\mu_{\text{dog}}$ that shares the visual concept of that image. On the other hand, the loss $l_{\text{repel}}(\theta)$ prompts the image representation to repel the prototype representation $\mu_{\text{cat}}$ that is closest among those not sharing the visual concept of that image. The parameter $\lambda$ balances the two losses.}
  \label{fig:opt2super}
\vspace{-20pt}
\end{wrapfigure}

\subsection{Supervised representation learning problem}

Let $\mathcal{X} \times \mathcal{Y}$ be a dataset comprising images and their associated visual concepts (represented as labels) of interest. To exploit the dataset to the fullest, we consider a set of transformations $\mathcal{T}$ that preserve the visual concepts and leverage them to create an augmented dataset.\footnote{Note that the choice of data augmentation can also be seen as a type of supervision \citep{xiao2020should}. By treating the labels of augmented images as identical, we supervise the resolution at which the model should be transformation invariant. Therefore, unlike $\mathcal{X}$, $\mathcal{T}(\mathcal{X})$ contains partial information about the labels, which enables self-supervised learning.} Then, we define the augmented dataset induced by $\mathcal{T}$ as
\begin{equation}
    \begin{aligned}
        &\mathcal{T}(\mathcal{X}) \times \mathcal{Y}\\ 
        &:= \{(t(x), y): (x, y) \in \mathcal{X} \times \mathcal{Y}\text{ and }t\in\mathcal{T}\}.
    \end{aligned}    
\end{equation}

Equipped with the augmented dataset, we want to train an encoder $f_\theta: \mathcal{X} \rightarrow \mathbb{R}^d \setminus \{0\}$ which is parameterized by learnable parameters $\theta$. It maps an image $t(x)$ to its representation $f_{\theta}(t(x))$. Typically, the representation dimension $d$ is small relative to the image size. By training the encoder, our goal is to make representations of images with the same visual concept, gathered close together, while representations of images with different visual concepts are meaningfully distant from each other. To keep the theoretical framework intuitive and concise, we begin with just these two fundamental ideas: positive samples are clustered, while negative samples are separated.

To achieve our goal, we employ the concept of \emph{prototype representation} of a visual concept to set targets for images \citep{li2020prototypical, caron2020unsupervised}. This denotes a point in the representation space that embodies the visual concept. To see the whole approximation process, we start by assuming that an oracle gives the ideal prototype representation, which can serve as a common target for images with the same visual concept during training. However, since such an oracle does not exist in reality, we later construct the prototype representation using available data.

From now on, we tag a data point $(t(x), y) \in \mathcal{T}(\mathcal{X}) \times \mathcal{Y}$ and base the formulation on it. Let $l_{\text{attract}}(\theta)$ and $l_{\text{repel}}(\theta)$ denote the attracting and repelling components of the loss function for the image representation $f_{\theta}(t(x))$. Specifically, $l_{\text{attract}}(\theta)$ encourages similarity with the prototype representation $\mu_y$ of its own label, while $l_{\text{repel}}(\theta)$ penalizes similarity with the prototype representations $\mu_{y'}$ of other labels ($y' \neq y$). The similarity measure is usually chosen to be cosine similarity. Then, we formulate the supervised representation learning problem as the following optimization problem: 
\begin{equation}
    \min\limits_{\theta} \quad l_{\text{attract}}(\theta) + \lambda l_{\text{repel}}(\theta)
\end{equation}
where $\lambda > 0$ is a parameter which balances the two losses.

In contrastive learning, there is no need to repel negative samples that are already dissimilar enough. In this context, we only repel the prototype representation with the maximum similarity among those representing distinct labels. Then, our problem becomes as follows:
\begin{equation}
\label{eq:problem}
    \min\limits_{\theta} \quad -s\left(f_{\theta}(t(x)), \mu_{y}\right) + \lambda \max_{y' \neq y} s\left(f_{\theta}(t(x)), \mu_{y'}\right)
\end{equation}
where $s(\cdot, \cdot)$ is a similarity measure (e.g., cosine similarity). For a better understanding, refer to Figure \ref{fig:opt2super}.


Note that our formulation is similar to minimizing the triplet loss in spirit \citep{chechik2010large, schroff2015facenet, schultz2003learning, arora2019theoretical}. In our formulation, we can see $f_{\theta}(t(x))$ as the anchor, the prototype representation $\mu_{y}$ as the positive sample, and the prototype representation $\mu_{y'}$ as the negative sample. Only considering the negative sample with maximum similarity is related to the concept of hard negative mining \citep{girshick2015fast, faghri2017improving, oh2016deep}. This idea has sometimes been implemented through the introduction of the concept of support vectors or margin \citep{cortes1995support, schroff2015facenet}. Pursuing this to the extreme leads us to repel the most challenging example, namely, the negative sample with maximum similarity.

Now, we construct the prototype representations. For a given label $y$, a natural choice for the prototype representation of the label is the expectation of the representations of the images with the same label, i.e.,
\begin{equation}
    \hat{\mu}_{y} := \mathbb{E}_{T, X \vert y}f_{\theta}(T(X))
\end{equation}
where $T$ is distributed over $\mathcal{T}$, and $X$ is conditionally distributed over $\{x: (x, y)\in\mathcal{X}\times\mathcal{Y}\}$. Plugging it to Equation (\ref{eq:problem}), our problem becomes as follows:
\begin{equation}
\label{eq:problem_2}
    \min\limits_{\theta} \quad -s\left(f_{\theta}(t(x)), \mathbb{E}_{T, X \vert y}f_{\theta}(T(X))\right) + \lambda \max_{y' \neq y} s\left(f_{\theta}(t(x)), \mathbb{E}_{T', X' \vert y'}f_{\theta}(T'(X'))\right)
\end{equation}
where $T'$ and $X'$ are independent copies of $T$ and $X$, respectively.


\subsection{Self-supervised representation learning problem}
\label{subsec:self_supervised_representation_learning_problem}


In the self-supervised learning regime, we do not have access to the labels. So, we use a surrogate prototype representation for the image $t(x)$ as the target. We construct it as the expectation of the representations of augmented views of the image $x$, i.e.,
\begin{equation}
    \tilde{\mu} := \mathbb{E}_{T}f_{\theta}(T(x)).
\end{equation}
Since data augmentation preserves labels, augmented views share the same (unobserved) label $y$. In Section \ref{sec:understanding_self_supervised_learning}, we demonstrate the importance of finding a data augmentation strategy that approximates well from the prototype representation $\mathbb{E}_{T, X \vert y}f_{\theta}(T(X))$ to the surrogate prototype representation $\mathbb{E}_{T}f_{\theta}(T(x))$. Plugging it in the attracting component of Equation (\ref{eq:problem_2}), we rewrite our problem as follows:
\begin{equation}
    \min\limits_{\theta} \quad -s\left(f_{\theta}(t(x)), \tilde{\mu}\right) + \lambda \max_{y' \neq y} s\left(f_{\theta}(t(x)), \hat{\mu}_{y'}\right).
\end{equation}
Note that we leave the repelling component as is since it can be managed without modification. In Section \ref{sec:theoretical_derivation}, we find an upper bound of the above objective function, and in Section \ref{sec:understanding_self_supervised_learning}, we show the upper bound can be minimized using a Siamese network. Through this, we show how attracting and repelling pseudo-labels ($\tilde{\mu}$ and $\hat{\mu}_{y'}$) can be achieved through attracting and repelling samples ($f_{\theta}(t'(x))$ and $f_{\theta}(t'(x'))$). Refer to Figure \ref{fig:super2self} for a better understanding.

\begin{figure*}[t!]
\vskip 0.2in
\begin{center}
\centerline{\includegraphics[width=\linewidth]{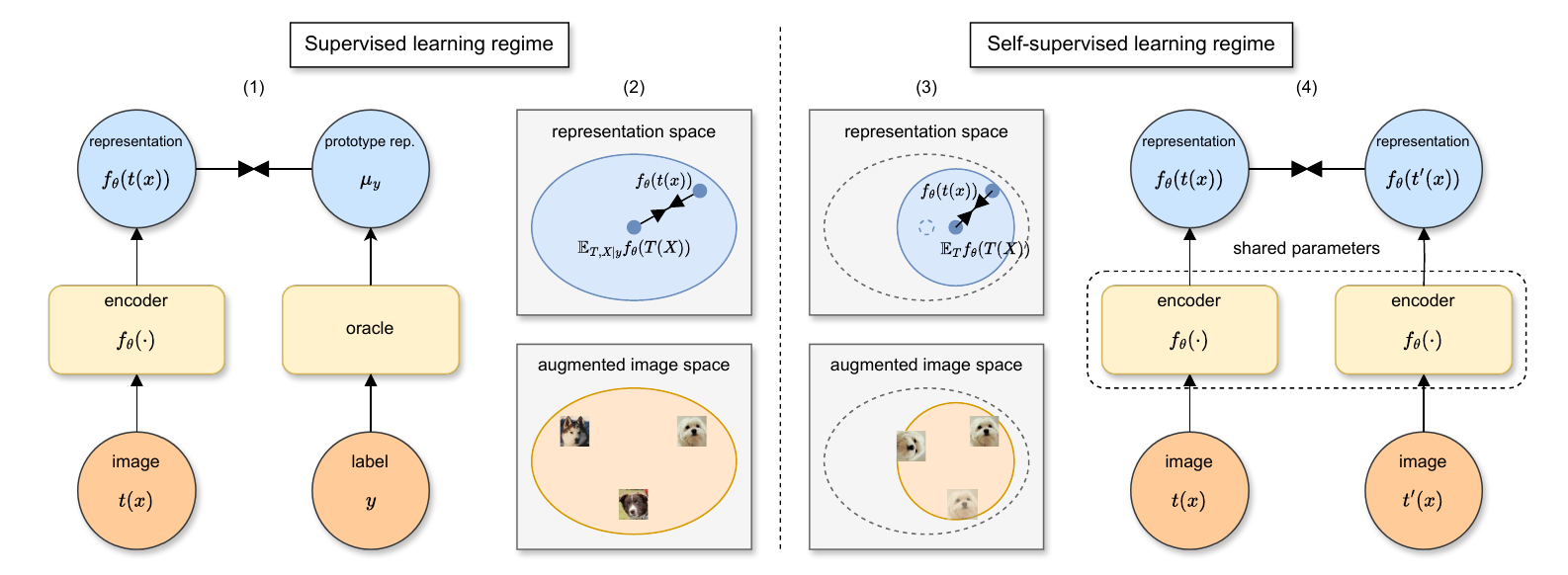}}
\caption{\textbf{Self-supervised learning as an approximation of supervised learning.} (1) In an ideal supervised regime, the ideal prototype representation $\mu_{y}$ is given by an oracle. (2) In a realistic supervised regime, the prototype representation is constructed as the expectation $\mathbb{E}_{T, X \vert y}f_{\theta}(T(X))$ of the representations of the images with the same label $y$. (3) In a self-supervised regime, a surrogate prototype representation is constructed as the expectation $\mathbb{E}_{T}f_{\theta}(T(x))$ of the representations of the available images sharing the same label as $t(x)$. (4) This can be effectively implemented using a Siamese network.}
\label{fig:super2self}
\end{center}
\vskip -0.2in
\end{figure*}


\section{Theoretical derivation}
\label{sec:theoretical_derivation}

In this section, we determine upper bounds of the attracting and repelling components. Our objective is to minimize these upper bounds, addressing the optimization problem discussed in the previous section. We show that the \textit{triplet loss with pseudo-labels} can be interpreted as an approximation to an \textit{InfoNCE-type loss with samples}. This perspective provides a theoretical link between prototype-based supervised learning and contrastive self-supervised learning frameworks.


\subsection{Attracting component}
\label{subsec:attracting_loss}

We first find an upper bound for the attracting component by making the following assumptions based on common practice.

\begin{restatable}[cosine similarity]{assumption}{similarity}
\label{assump:cosine_similarity}
    The similarity measure $s(\cdot, \cdot)$ is cosine similarity, i.e., $s(x_1, x_2) = x_1 \cdot x_2 / (\lVert x_1 \rVert \lVert x_2 \rVert)$. When we say $s(x_1,x_2)$, we assume $x_1$ and $x_2$ are nonzero.
\end{restatable}

\begin{restatable}[$l_2$-normalization]{assumption}{normalization}
\label{assump:l2_normalization}
    Representations at the end of the encoder are $l_2$-normalized so that $\| f_{\theta}(t(x)) \| = 1$, i.e., $f_\theta: \mathcal{X} \rightarrow \mathbb{S}^{d-1}$. Here, $\mathbb{S}^{d-1} := \{ x \in \mathbb{R}^d : \|x\| = 1 \}$ denotes the unit sphere in $\mathbb{R}^d$.
\end{restatable}

\begin{restatable}[technical assumption]{assumption}{technical}
\label{assump:technical_assumption}
We additionally make a technical assumption which means that the two vectors $f_{\theta}(t(x))$ and $\mathbb{E}_{T}f_{\theta}(T(x))$ lie in the same hemisphere, i.e., $f_{\theta}(t(x)) \cdot \mathbb{E}_{T}f_{\theta}(T(x)) \geq 0$. Informally speaking, this means that the augmentation does not distort the image too much, so $\mathbb{E}_{T}f_{\theta}(T(x))$ does not point in a completely different direction.
\end{restatable}

\begin{restatable}[upper bound of the attracting component]{theorem}{attracting}
\label{thm:attracting}
    Assume Assumption \ref{assump:cosine_similarity}, \ref{assump:l2_normalization}, and \ref{assump:technical_assumption} hold. Then,
    \begin{equation}
    \label{eq:attracting}
    -s\left(f_{\theta}(t(x)), \mathbb{E}_{T}f_{\theta}(T(x))\right) \leq -\mathbb{E}_{T}s\left(f_{\theta}(t(x)), f_{\theta}(T(x))\right).
    \end{equation}
\end{restatable}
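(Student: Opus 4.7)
The plan is to just unwind the two sides using Assumptions~\ref{assump:cosine_similarity}, \ref{assump:l2_normalization}, and reduce the claim to a one‑line scalar inequality that follows from Jensen's (equivalently, triangle) inequality combined with Assumption~\ref{assump:technical_assumption}. Write $v := f_{\theta}(t(x))$ and $\mu := \mathbb{E}_{T} f_{\theta}(T(x))$, and note that by Assumption~\ref{assump:l2_normalization} we have $\|v\| = \|f_{\theta}(T(x))\| = 1$ almost surely.

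\textbf{Step 1: rewrite both sides.} By Assumption~\ref{assump:cosine_similarity} and the above normalizations,
\begin{equation*}
-s\!\left(v,\mu\right) \;=\; -\frac{v\cdot\mu}{\|\mu\|},
\qquad
-\mathbb{E}_{T}\, s\!\left(v,\, f_{\theta}(T(x))\right) \;=\; -\mathbb{E}_{T}\bigl[v\cdot f_{\theta}(T(x))\bigr] \;=\; -\,v\cdot\mu,
\end{equation*}
where the last equality uses linearity of the inner product. Set $a := v\cdot\mu$ and $b := \|\mu\|$; then \eqref{eq:attracting} reduces to showing $-a/b \le -a$, i.e., $a\,(b-1)/b \le 0$.

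\textbf{Step 2: bound $b$ by Jensen.} Since $x\mapsto \|x\|$ is convex, Jensen's inequality gives
\begin{equation*}
b \;=\; \|\mathbb{E}_{T} f_{\theta}(T(x))\| \;\le\; \mathbb{E}_{T} \|f_{\theta}(T(x))\| \;=\; 1,
\end{equation*}
so $b-1 \le 0$. Also $b>0$, which is implicit since $s(v,\mu)$ is assumed to be defined (Assumption~\ref{assump:cosine_similarity}).

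\textbf{Step 3: invoke Assumption~\ref{assump:technical_assumption} and conclude.} Assumption~\ref{assump:technical_assumption} states exactly $a = v\cdot\mu \ge 0$. Combining the three sign facts $a\ge 0$, $b-1 \le 0$, and $b>0$ yields $a(b-1)/b \le 0$, which is the desired inequality.

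\textbf{Main obstacle.} The argument is essentially a two‑line calculation, so the only subtlety is seeing why Assumption~\ref{assump:technical_assumption} is actually needed: without $v\cdot\mu \ge 0$, the factor $1/b \ge 1$ would flip the inequality on the negative part of the inner product. This is the conceptual point worth highlighting — the gap between the ``pseudo‑label'' similarity $s(v,\mu)$ and the ``sample'' similarity $\mathbb{E}_{T} s(v, f_{\theta}(T(x)))$ is driven entirely by the norm shrinkage $\|\mu\|\le 1$ of the averaged representation, and this shrinkage only yields a usable upper bound in the hemisphere where augmented views are positively correlated with the anchor.
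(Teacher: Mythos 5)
The proposal is correct and takes essentially the same approach as the paper: both reduce the two sides to $-\,(v\cdot\mu)/\|\mu\|$ and $-\,v\cdot\mu$ using the cosine-similarity and normalization assumptions, apply Jensen's inequality to the convex norm to obtain $\|\mu\|\le 1$, and invoke the technical assumption $v\cdot\mu\ge 0$ to get the sign needed for the inequality. The only difference is presentational, since you compute both sides first and then reduce to the scalar inequality $-a/b\le -a$, whereas the paper runs a forward chain of (in)equalities.
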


\begin{proof}
    Refer to Appendix \ref{subsubsec:proof_attracting}.
\end{proof}

We approximate the upper bound and obtain the following sample analog:
\begin{equation}
\label{eq:attracting_sample}
    \widetilde{l}_{\text{attract}}(\theta) := -\frac{1}{|\hat{\mathcal{T}}|}\sum_{t'\in\hat{\mathcal{T}}}s\left(f_{\theta}(t(x)), f_{\theta}(t'(x))\right)
\end{equation}
where $\hat{\mathcal{T}}$ is the set of transformation samples.


\subsection{Repelling component}
\label{subsec:repelling_loss}

We now find an upper bound for the repelling component by making the following assumption.

\begin{restatable}[balanced dataset]{assumption}{balanced}
\label{assump:balanced_dataset}
    Labels are uniformly distributed, i.e., $p(y)=\frac{1}{n}$, where $n$ is the finite number of labels.
\end{restatable}

\begin{restatable}[upper bound of the repelling component]{theorem}{repelling}
\label{thm:repelling}
    Assume Assumption \ref{assump:cosine_similarity}, \ref{assump:l2_normalization}, and \ref{assump:balanced_dataset} hold. Let $\nu := \min_{y' \neq y} \| \mathbb{E}_{T', X' \vert y'}f_{\theta}(T'(X')) \|$. Then, for all $\alpha>0$,
    \begin{equation}
    \label{eq:repelling}
        \max_{y' \neq y} s\left(f_{\theta}(t(x)), \mathbb{E}_{T', X' \vert y'}f_{\theta}(T'(X'))\right) \leq \mathbb{E}_{T'}\left[\frac{1}{\nu\alpha}\log \mathbb{E}_{X'}\exp\left(\alpha s\left(f_{\theta}(t(x)), f_{\theta}(T'(X'))\right)\right)\right] + \frac{1}{\nu\alpha}\log n.
    \end{equation}
\end{restatable}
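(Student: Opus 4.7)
The plan is to turn the LHS $\max_{y'\neq y} s(f_\theta(t(x)),\mu_{y'})$, writing $\mu_{y'} := \mathbb{E}_{T', X' \vert y'}f_\theta(T'(X'))$, into an expected similarity between \emph{samples} rather than to a prototype, then smooth the outer maximum with a log-sum-exp and use balancedness to collapse the class sum into a single expectation over $X'$. First, by Assumption~\ref{assump:cosine_similarity} and Assumption~\ref{assump:l2_normalization}, I would write $s(f_\theta(t(x)), \mu_{y'}) = f_\theta(t(x)) \cdot \mu_{y'}/\|\mu_{y'}\|$, and, using the definition of $\nu$, upper bound $1/\|\mu_{y'}\|$ by $1/\nu$ to strip off the prototype-specific normalizer. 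By linearity of expectation, $f_\theta(t(x))\cdot\mu_{y'} = \mathbb{E}_{T',X'\vert y'}[f_\theta(t(x))\cdot f_\theta(T'(X'))]$, which equals $\mathbb{E}_{T',X'\vert y'}[s(f_\theta(t(x)), f_\theta(T'(X')))]$ since both vectors are unit-norm.

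This reduces the goal to upper bounding $\frac{1}{\nu}\max_{y'\neq y}\mathbb{E}_{T'}\mathbb{E}_{X'\vert y'}[s(f_\theta(t(x)),f_\theta(T'(X')))]$. The next step is to move $\mathbb{E}_{T'}$ outside the max via the standard convexity-of-max inequality $\max_{y'}\mathbb{E}_{T'}[\cdot]\leq \mathbb{E}_{T'}\max_{y'}[\cdot]$. Then, for any $\alpha>0$, I would apply the log-sum-exp bound $\max_{y'\neq y} z_{y'} \leq \frac{1}{\alpha}\log\sum_{y'\neq y}\exp(\alpha z_{y'}) \leq \frac{1}{\alpha}\log\sum_{y'=1}^{n}\exp(\alpha z_{y'})$ with $z_{y'} := \mathbb{E}_{X'\vert y'}[s(f_\theta(t(x)), f_\theta(T'(X')))]$. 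Enlarging the index set from $y'\neq y$ to all $n$ classes is what ultimately contributes the $\log n$ term in the final bound.

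Finally, I would push $\mathbb{E}_{X'\vert y'}$ inside the exponential via Jensen, $\exp(\alpha \mathbb{E}_{X'\vert y'}[Z])\leq \mathbb{E}_{X'\vert y'}[\exp(\alpha Z)]$, obtaining $\sum_{y'=1}^{n}\mathbb{E}_{X'\vert y'}[\exp(\alpha s(f_\theta(t(x)),f_\theta(T'(X'))))]$. By Assumption~\ref{assump:balanced_dataset} and the law of total expectation, this equals $n\,\mathbb{E}_{X'}[\exp(\alpha s(f_\theta(t(x)),f_\theta(T'(X'))))]$. Taking logarithms splits cleanly as $\log n + \log \mathbb{E}_{X'}[\exp(\alpha s(\cdot,\cdot))]$; dividing by $\nu\alpha$ and carrying the outer $\mathbb{E}_{T'}$ through yields exactly the RHS of Equation~(\ref{eq:repelling}).

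The main obstacle I anticipate is the first step: the bound $s(f_\theta(t(x)), \mu_{y'}) \leq \frac{1}{\nu} f_\theta(t(x))\cdot\mu_{y'}$ is only directly valid where the dot product at the maximizer is nonnegative, so a bit of care is needed in the degenerate case where $f_\theta(t(x))\cdot\mu_{y'}<0$ for every $y'\neq y$ (there the LHS is negative and the log-sum-exp RHS, together with its $\log n$ term, remains large enough that the inequality continues to hold, but this deserves a separate sentence). A secondary bookkeeping concern is to make sure both Jensen-type moves, swapping $\max$ with $\mathbb{E}_{T'}$ and pushing $\exp$ past $\mathbb{E}_{X'\vert y'}$, go in the upper-bound direction, which they do because $\max$ and $\exp$ are both convex.
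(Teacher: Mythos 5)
Your proof follows the same chain of inequalities as the paper's: write the cosine similarity to the prototype $\hat{\mu}_{y'}:=\mathbb{E}_{T',X'\mid y'}f_\theta(T'(X'))$ as a normalized dot product, replace $1/\|\hat{\mu}_{y'}\|$ by $1/\nu$, smooth the max with log-sum-exp (where enlarging from $y'\neq y$ to all $n$ classes contributes the $\log n$ term), apply Jensen twice to pull $\mathbb{E}_{T'}$ out and push $\exp$ past $\mathbb{E}_{X'\mid y'}$, and finish with balancedness. The second half matches the paper's reasoning up to a cosmetic re-ordering: you interchange $\max$ and $\mathbb{E}_{T'}$ directly, whereas the paper applies the log-sum-exp bound first and then invokes the convexity of the scaled log-sum-exp function (Lemma~\ref{lem:logsumexp_2}) plus Jensen. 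Both routes are fine.

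The one genuine gap, which you flag but do not close, is the first step. As you observe, the pointwise bound $s(f_\theta(t(x)),\hat{\mu}_{y'}) \le (f_\theta(t(x))\cdot\hat{\mu}_{y'})/\nu$ requires the dot product to be nonnegative. Your proposed resolution, that the ``log-sum-exp RHS, together with its $\log n$ term, remains large enough,'' is an assertion, not an argument. In fact, the intermediate reduction you want,
\[
\max_{y'\neq y} s\bigl(f_\theta(t(x)),\hat{\mu}_{y'}\bigr) \;\le\; \frac{1}{\nu}\,\max_{y'\neq y} \mathbb{E}_{T',X'\mid y'}\, s\bigl(f_\theta(t(x)), f_\theta(T'(X'))\bigr),
\]
can genuinely fail when every dot product is negative: for two negative labels with dot products $-0.1$ and $-0.05$ and prototype norms $0.1$ and $0.5$ (consistent with Cauchy--Schwarz since the anchor is unit-norm), the left side is $\max(-1,-0.1)=-0.1$ while the right side is $(-0.05)/0.1=-0.5$. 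Rescuing the theorem in this case would require reasoning directly about its stated RHS, which depends on $\alpha$ and the full marginal $\mathbb{E}_{X'}$, rather than about this intermediate bound. It is worth noting that the paper's own proof splits here into two cases; its Case~2 (positive dot product at the maximizer) is handled cleanly via a max-of-product lemma (Lemma~\ref{lem:max_product}), but the final step of its Case~1, which asserts that multiplying a nonpositive maximum by $1/\nu\ge 1$ yields an upper bound, runs into exactly the sign problem you identified. So your instinct that this case ``deserves a separate sentence'' is right, but neither your sketch nor the paper's proof actually supplies that sentence.
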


\begin{proof}
    We approximate the maximum function by the log-sum-exp function and apply Jensen inequality to pull out the expectations. For the detailed proof, refer to Appendix \ref{subsubsec:proof_repelling}.
\end{proof}

If we approximate the upper bound and trim the constant terms, which are not relevant to optimization, we obtain the following:
\begin{equation}
\label{eq:repelling_sample}
    \widetilde{l}_{\text{repel}}(\theta) := \frac{1}{|\hat{\mathcal{T}}|}\sum\limits_{t' \in \hat{\mathcal{T}}} 
\frac{1}{\nu\alpha}\log \sum\limits_{x' \in \hat{\mathcal{X}}} \exp(\alpha s(f_{\theta}(t(x)), f_{\theta}(t'(x'))))
\end{equation}
where $\hat{\mathcal{T}}$ is the set of transformation samples, and $\hat{\mathcal{X}}$ is the set of image samples.

\subsection{Total loss}
\label{subsec:total_loss}

By combining Equation (\ref{eq:attracting_sample}) and (\ref{eq:repelling_sample}), the total loss $\widetilde{l}(\theta) := \widetilde{l}_{\text{attract}}(\theta) + \lambda \widetilde{l}_{\text{repel}}(\theta)$ is as follows:
\begin{equation}
\label{eq:total_loss}
    \widetilde{l}(\theta) = \frac{1}{ |\hat{\mathcal{T}}|}\sum_{t'\in\hat{\mathcal{T}}}\left[-s\left(f_{\theta}(t(x)), f_{\theta}(t'(x))\right) + \frac{\lambda}{\nu} \left[ \frac{1}{\alpha}\log \sum\limits_{x' \in \hat{\mathcal{X}}} \exp(\alpha s(f_{\theta}(t(x)), f_{\theta}(t'(x')))) \right] \right].
\end{equation}

By rearranging, we have
\begin{equation}
\label{eq:total_loss_2}
    \widetilde{l}(\theta) = \frac{1}{\alpha |\hat{\mathcal{T}}|}\sum_{t'\in\hat{\mathcal{T}}}\left[ -\log\frac{\exp(\alpha s\left(f_{\theta}(t(x)), f_{\theta}(t'(x))\right))}{\left( \sum_{x' \in \hat{\mathcal{X}}} \exp(\alpha s(f_{\theta}(t(x)), f_{\theta}(t'(x')))) \right)^{\lambda / \nu}} \right].
\end{equation}

Note that this equation and the NT-Xent in SimCLR are similar in their forms, which we discuss in more detail in the next section.


\section{Theoretical insights}
\label{sec:understanding_self_supervised_learning}

In this section, we present theoretical insights derived from our framework, illustrating how it relates to several components commonly used in self-supervised learning. We use SimCLR \citep{chen2020simple} as a primary example, as it has served as a central reference point for many subsequent algorithms.

For our experiments, we adopt SimCLR with a temperature parameter $\tau = 0.5$, using ImageNet \citep{deng2009imagenet} as the dataset and ResNet-50 \citep{he2016deep} as the backbone. We assess top-$1$ accuracy using linear evaluation, a standard protocol for evaluating self-supervised learning algorithms. For a fair comparison, all settings are kept the same except for the specific factor under investigation. For the detailed implementation, refer to \ref{subsec:implementation_details}.


\subsection{Loss: NT-Xent}

Let $\{x_1,\dots, x_m\}$ be a minibatch of $m$ images. If we transform each image in two different ways and pass them through the encoder, we obtain representation pairs $\{(f_{\theta}(t(x_i)), f_{\theta}(t'(x_i))): i=1,\dots,m\}$ of $2m$ augmented images, which we denote as $\{(z_i, z'_i): i=1,\dots,m\}$. Then, in the case of $\lambda = \nu$, the summand in Equation (\ref{eq:total_loss_2}) can be implemented as
\begin{equation}
\label{eq:summand}
    -\log\frac{\exp(\alpha s(z_i, z'_i))}{\sum_{j\in [m]\setminus\{i\}}\exp(\alpha s(z_i, z'_j))}
\end{equation}
where $[m] := \{1,\dots,m\}$. 

On the other hand, in the NT-Xent loss used in SimCLR, if we let the temperature parameter $\tau$ be $1/\alpha$, the NT-Xent loss is represented as
\begin{equation}
\label{eq:summand_2}
    -\log\frac{\exp(\alpha s(z_i, z'_i))}{\sum_{j\in [m]}\exp(\alpha s(z_i, z'_j)) + \sum_{j\in [m]\setminus\{i\}}\exp(\alpha s(z_i, z_j))}.
\end{equation}

This is a variant of Equation (\ref{eq:summand}). Having the second summation in the denominator can be seen as a method to more fully exploit the provided representations, since $(z_i, z_j)$ are also considered negative pairs when $j\neq i$. 

In the first summation in the denominator, the positive pair is explicitly excluded in our theoretical derivation, yielding a decoupled loss formulation. Interestingly, this coincides with the decoupled contrastive loss proposed by \citet{yeh2022decoupled}, who empirically showed that summing over $[m] \setminus \{i\}$ performs better than over $[m]$. 

Common expressions of contrastive losses, such as cross-entropy and temperature, typically frame them in the form of the Boltzmann (or Gibbs) distribution. Our framework offers a complementary perspective by deriving a similar structure from a supervised learning formulation.

\begin{figure}[t]
\vskip 0.2in
\begin{center}
\centerline{\includegraphics[width=0.6\linewidth]{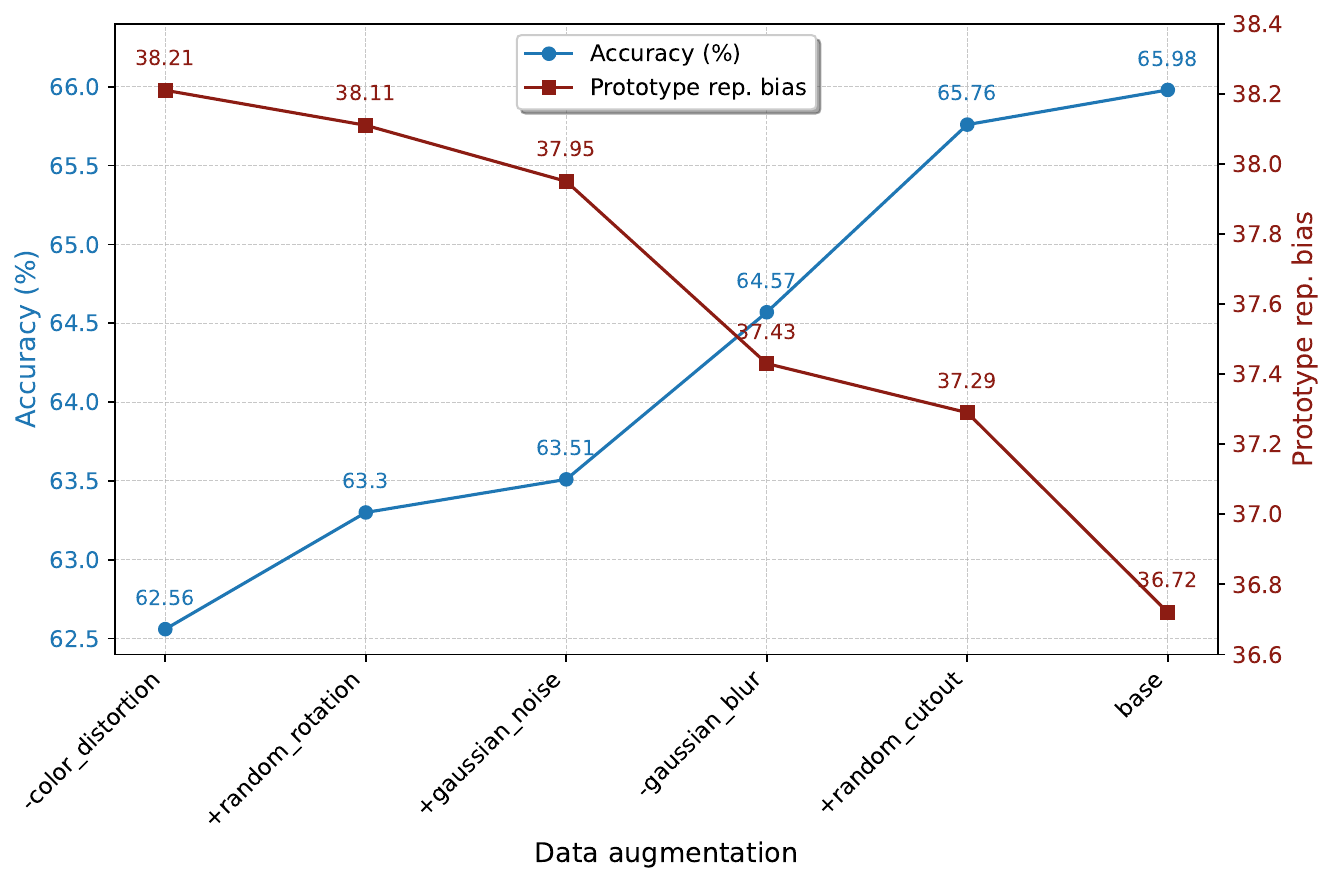}}
\caption{\textbf{Accuracy vs. prototype representation bias.} We investigate the relationship between accuracy and prototype representation bias by adding or removing transformations from SimCLR's data augmentation strategy (base). Lower prototype representation bias tends to result in higher accuracy.}
\label{fig:accuracy_vs_prototype_rep_bias}
\end{center}
\vskip -0.2in
\end{figure}

\subsection{Data augmentation: debiased prototype representation}
\label{subsec:data_augmentation}

When transitioning from supervised to self-supervised learning, we approximate the prototype representation $\mathbb{E}_{T, X \vert y}f_{\theta}(T(X))$ with the surrogate prototype representation $\mathbb{E}_{T}f_{\theta}(T(x))$. To examine the quality of this approximation, we define the \emph{prototype representation bias} as
\begin{equation}
    \mathrm{Bias}_{\text{proto}} := \mathbb{E}_{(X_0,Y_0)} \lVert \mathbb{E}_{T,X \vert Y_0}f_{\theta}(T(X)) - \mathbb{E}_{T}f_{\theta}(T(X_0)) \rVert.
\end{equation}
We hypothesize that reducing this bias is associated with improved downstream accuracy. To test this, we vary the distribution of $T$ through different data augmentation strategies. Specifically, we compare SimCLR's default data augmentation (\verb|base|) with cases where we exclude Gaussian blur (\verb|-gaussian_blur|) and color distortion (\verb|-color_distortion|), and with cases where we include random cutout (\verb|+random_cutout|), random rotation (\verb|+random_rotation|), and gaussian noise (\verb|+gaussian_noise|), resulting in a total of six scenarios. 

Figure \ref{fig:accuracy_vs_prototype_rep_bias} shows that using data augmentation with debiased prototype representation leads to an increase in accuracy. Notably, SimCLR’s default augmentation achieves both the highest accuracy and the smallest bias. Interestingly, enriching the data augmentation by adding transformations such as random cutout, random rotation, or gaussian noise does not improve accuracy. This may be due to an increased mismatch between the surrogate and true prototype representations.


\subsection{Similarity measure: cosine similarity with normalized representations}
\label{subsec:similarity_measure}

\begin{wraptable}{r}{0.5\textwidth}
  \caption{Comparison of similarity measures with and without $l_2$-normalization. The results show that cosine similarity with normalization significantly outperforms the other variants.}
  \label{tab:similarity_measure}
  \centering
  \begin{tabular}{ccc}
    \toprule
    CS w/ $l_2$ & Dot w/o $l_2$ & -Eucl. w/o $l_2$ \\
    \midrule
    65.98 & 0.43 & 10.63 \\
    \bottomrule
  \end{tabular}
\end{wraptable}

When computing similarity between two representations, many self-supervised learning algorithms including SimCLR normalize the representations and calculate cosine similarity as in Assumption \ref{assump:cosine_similarity} and \ref{assump:l2_normalization}. To investigate the empirical implications of these assumptions, we compare three cases: 1) cosine similarity with normalization, 2) dot product without normalization, and 3) negative Euclidean distance without normalization.\footnote{Note that when dealing with two normalized vectors, cosine similarity is equivalent to the dot product. Additionally, negative Euclidean distance with normalization is equivalent to cosine similarity with normalization since $-\| a - b \|^2 = -2 + 2a \cdot b$.} 

Table \ref{tab:similarity_measure} shows that cosine similarity with normalized representations significantly outperforms the alternatives. Among the unnormalized variants, negative Euclidean distance performs better than the dot product, possibly because it captures spatial dissimilarity more directly. These results suggest that the widespread use of cosine similarity with normalization in contrastive learning is consistent with both empirical effectiveness and the assumptions required for tractable theoretical analysis.


\subsection{Dataset: balanced class distribution}
\label{subsec:dataset}

\begin{wraptable}{r}{0.5\textwidth}
\vspace{-12pt}
  \caption{Comparison of class distributions. The results show that the uniform class distribution leads to better performance.}
  \label{tab:class_distribution}
  \centering
  \begin{tabular}{cc}
    \toprule
    Uniform & Long-tailed \\
    \midrule
    20.82   & 13.65   \\
    \bottomrule
  \end{tabular}
\vspace{-10pt}
\end{wraptable}

To examine the effect of class balance as in Assumption~\ref{assump:balanced_dataset}, we conduct a controlled experiment comparing uniform and long-tailed class distributions. In both cases, the training sets contain the same number of images (115,846, which is 9$\%$ of the ImageNet training set), but they differ in class distribution. We use an identical test set for both cases.

Table \ref{tab:class_distribution} shows that SimCLR performs better on a balanced dataset compared to an imbalanced one. The observed effect supports the idea that class balance, a widely adopted practice in contrastive learning \citep{assran2022hidden, assran2022masked, zhou2022contrastive}, aligns with assumptions that enable tractable theoretical analysis in our framework.


\subsection{Architecture: Siamese networks}

The upper bound $-\mathbb{E}_{T}s\left(f_{\theta}(t(x)), f_{\theta}(T(x))\right)$ in Equation~(\ref{eq:attracting}) involves comparing the similarity between two representations $f_{\theta}(t(x))$ and $f_{\theta}(t'(x))$, where $t$ and $t'$ are independently sampled augmentations. This naturally corresponds to a Siamese network architecture \citep{bromley1993signature}, where a single image $x$ is augmented twice to produce $t(x)$ and $t'(x)$, and each is passed through a shared encoder $f_{\theta}$. Siamese networks naturally align with the structure of similarity-based objectives in our framework.

Although Siamese networks are typically symmetric, with two encoders that share parameters and have identical architectures, several algorithms introduce asymmetry to improve performance \citep{he2020momentum, chen2021exploring, grill2020bootstrap, caron2020unsupervised, caron2021emerging, oquab2023dinov2, tian2021understanding}. In such cases, it has been empirically observed that performance improves when one encoder produces outputs with lower variance than the other \citep{wang2022importance}. The lower-variance encoder is commonly referred to as the target or teacher, and the higher-variance encoder as the source or student. 

In our problem formulation, the original attracting component in Equation (\ref{eq:attracting}) is $-s\left(f_{\theta}(t(x)), \mathbb{E}_{T}f_{\theta}(T(x))\right)$ where the two attracting objects $f_{\theta}(t(x))$ and $\mathbb{E}_{T}f_{\theta}(T(x))$ are asymmetric. Note that $\mathbb{E}_{T}f_{\theta}(T(x))$ can be approximated by $\frac{1}{n}\sum_{i=1}^{n}f_{\theta}(T_{i}(x))$, and $\frac{1}{n}\sum_{i=1}^{n}f_{\theta}(T_{i}(x))$ has less variance than $f_{\theta}(T(x))$. 

This suggests that our problem formulation, along with Theorem~\ref{thm:attracting}, may provide insight into the coexistence of both symmetric and asymmetric designs in the self-supervised learning literature.


\begin{figure*}
    \centering
    \begin{subfigure}{0.45\textwidth}
        \includegraphics[width=\linewidth]{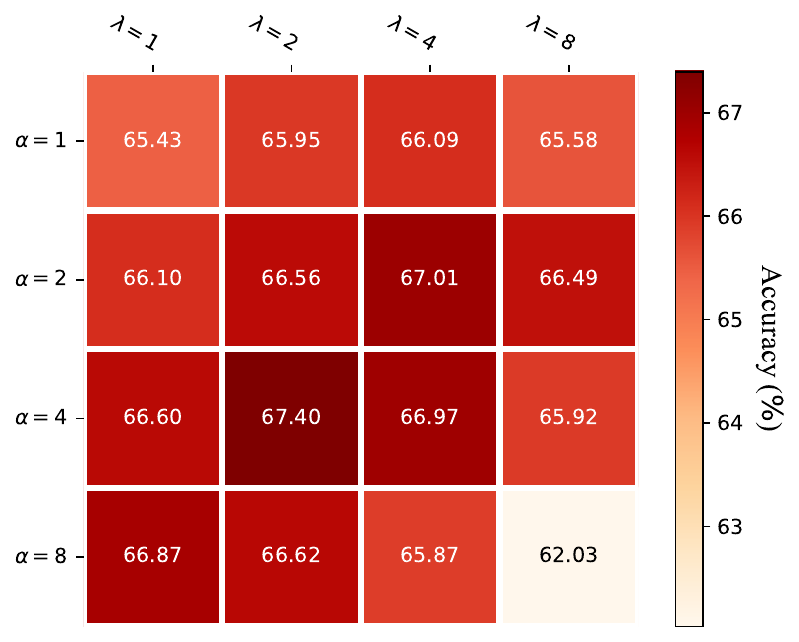}
        \caption{Balanced contrastive loss}
        \label{fig:subfig1}
    \end{subfigure}
    \begin{subfigure}{0.45\textwidth}
        \includegraphics[width=\linewidth]{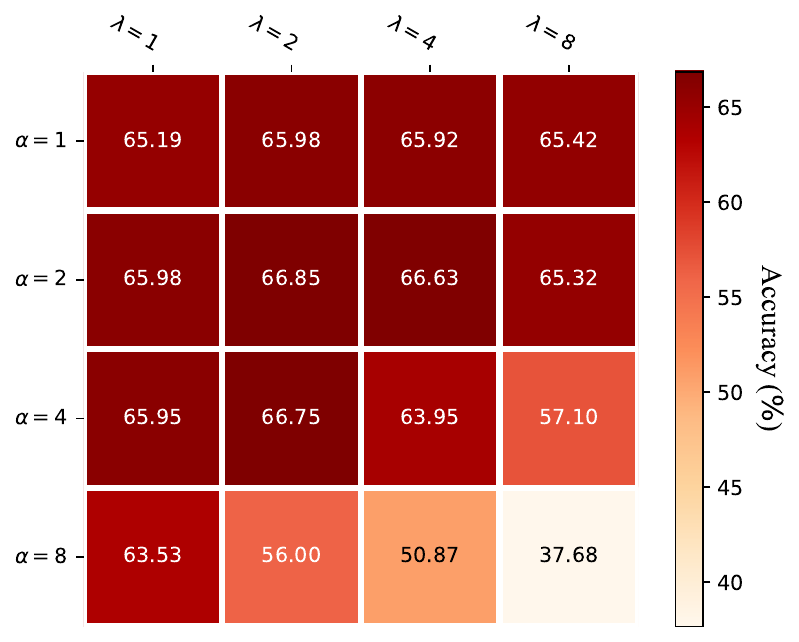}
        \caption{Generalized NT-Xent loss}
        \label{fig:subfig2}
    \end{subfigure}
    \caption{\textbf{Impact of balancing parameters $\alpha$ and $\lambda$.} Better balancing can be accomplished through the adjustments of the balancing parameters.}
    \label{fig:balancing_parameters}
\end{figure*}

\section{Empirical study}
\label{sec:empirical_study}

In this section, we introduce a loss that is motivated by the form of Equation (\ref{eq:total_loss}). Our aim is to help understand the roles of the balancing parameters that constitute this loss in our framework and to empirically report how varying them affects performance. For notational simplicity, we rewrite $\lambda / \nu$ as $\lambda$. Given a representation $z$ among the $2m$ representations obtained from a minibatch of $m$ images, we define the following loss:
\begin{equation}
\label{eq:balanced_contrastive_loss}
    - s(z, z^+) + \lambda \left[ \frac{1}{\alpha} \log\sum_{z^-}\exp(\alpha s(z, z^-)) \right]
\end{equation}
where $(z, z^+)$ is the positive pair and $(z, z^-)$ are $2(m-1)$ negative pairs. The cost for the whole minibatch is then calculated by taking the mean of the losses of all representations. Note that the attracting component consists of one attracting force, and the repelling component consists of multiple repelling forces. We refer to this as the \textit{balanced contrastive loss}.


There are two hyperparameters $\alpha > 0$ and $\lambda > 0$ in the balanced contrastive loss. We refer to these as the \textit{balancing parameters} since each governs a different form of balance in contrastive learning. The parameter $\alpha$ modulates the relative influence among negative samples within the repelling term \citep{kalantidis2020hard, zhang2022dual, jiang2024supervised}. Note that the repelling component is a smooth approximation to the maximum function (refer to Lemma \ref{lem:logsumexp_1} and \citet{wang2021understanding}): 
\begin{equation}
    \lim_{\alpha \rightarrow \infty} \left[ \frac{1}{\alpha} \log \sum_{z^-} \exp(\alpha s(z, z^-)) \right] = \max_{z^-} s(z, z^-).
\end{equation}
As $\alpha$ increases, representations with higher similarity contribute more strongly to the repelling term. In self-supervised learning, negative samples may include images with the same label (referred to as sampling bias in \citet{chuang2020debiased}). So, if $\alpha$ is too large, there is a risk of repelling images with the same label. Appropriately choosing $\alpha$ can be interpreted as a form of risk hedging over multiple negative samples. This also offers insight into the role of the temperature parameters of InfoNCE-type losses. On the other hand, the parameter $\lambda$ adjusts the relative magnitudes of the attracting and repelling forces.

To investigate the impact of balancing parameters $\alpha$ and $\lambda$, we evaluate the balanced contrastive loss over a grid of parameters $\{(\alpha, \lambda): \alpha, \lambda \in \{1, 2, 4, 8\}\}$. We also consider a variant where the positive pair is included in the repelling component in Equation (\ref{eq:balanced_contrastive_loss}), which we refer to as the \textit{generalized NT-Xent loss}, as it reduces to NT-Xent when $\lambda = 1$. Figure \ref{fig:balancing_parameters} illustrates the changes in accuracy based on various combinations of the parameters. Note that, since ImageNet contains 1,000 classes, the chance-level top-1 accuracy is 0.1$\%$.

Overall, the balanced contrastive loss achieves higher peak performance than the generalized NT-Xent loss. For the balanced contrastive loss, the best performance is obtained at $(\alpha, \lambda) = (4, 2)$, while the generalized NT-Xent loss performs best at $(2, 2)$. In both cases, the highest accuracy is not achieved when $\lambda = 1$. This highlights the significance of the balancing parameter $\lambda$. Additionally in both scenarios, it is crucial for $\alpha$ to have an appropriate value that is not too large or too small. Specifically for the generalized NT-Xent, it is advantageous to set $\alpha$ to a smaller value compared to the balanced contrastive loss. This may be due to the presence of the positive sample in the repelling component, meaning that increasing $\alpha$ results in a larger repulsion of the positive sample. 

Given the 0.1$\%$ chance-level accuracy on ImageNet, these performance differences are substantial, especially considering they are achieved solely by adjusting the balancing parameters. These results suggest that further improvements to contrastive losses may be possible through better balancing.


\section{Conclusion}
\label{sec:conclusion}

In this work, we present a theoretical framework that conceptualizes self-supervised representation learning as an approximation to supervised representation learning. Starting from a concise formulation of the supervised objective, we derive how a natural approximation emerges in the absence of labels. In particular, we show that the triplet loss with pseudo-labels can be viewed as an approximation to an InfoNCE-type loss with samples, offering a principled explanation for the structure of widely used contrastive losses. Our framework provides theoretical insights into common design choices in self-supervised learning. Additionally, it sheds light on sources of bias in prototype representations and motivates a balanced contrastive loss that improves empirical performance. We hope that our work will benefit the community by offering helpful perspectives and encouraging further exploration of the connections between supervised and self-supervised learning.


\bibliography{main}

\begin{thebibliography}{76}
\providecommand{\natexlab}[1]{#1}
\providecommand{\url}[1]{\texttt{#1}}
\expandafter\ifx\csname urlstyle\endcsname\relax
  \providecommand{\doi}[1]{doi: #1}\else
  \providecommand{\doi}{doi: \begingroup \urlstyle{rm}\Url}\fi

\bibitem[Aitchison \& Ganev(2024)Aitchison and Ganev]{aitchison2024infonce}
Laurence Aitchison and Stoil~Krasimirov Ganev.
\newblock Info{NCE} is variational inference in a recognition parameterised model.
\newblock \emph{Transactions on Machine Learning Research}, 2024.
\newblock ISSN 2835-8856.
\newblock URL \url{https://openreview.net/forum?id=chbRsWwjax}.

\bibitem[Amrani et~al.(2022)Amrani, Karlinsky, and Bronstein]{amrani2022self}
Elad Amrani, Leonid Karlinsky, and Alex Bronstein.
\newblock Self-supervised classification network.
\newblock In \emph{European Conference on Computer Vision}, pp.\  116--132. Springer, 2022.

\bibitem[Arora et~al.(2019)Arora, Khandeparkar, Khodak, Plevrakis, and Saunshi]{arora2019theoretical}
Sanjeev Arora, Hrishikesh Khandeparkar, Mikhail Khodak, Orestis Plevrakis, and Nikunj Saunshi.
\newblock A theoretical analysis of contrastive unsupervised representation learning.
\newblock \emph{arXiv preprint arXiv:1902.09229}, 2019.

\bibitem[Assran et~al.(2022{\natexlab{a}})Assran, Caron, Misra, Bojanowski, Bordes, Vincent, Joulin, Rabbat, and Ballas]{assran2022masked}
Mahmoud Assran, Mathilde Caron, Ishan Misra, Piotr Bojanowski, Florian Bordes, Pascal Vincent, Armand Joulin, Mike Rabbat, and Nicolas Ballas.
\newblock Masked siamese networks for label-efficient learning.
\newblock In \emph{European Conference on Computer Vision}, pp.\  456--473. Springer, 2022{\natexlab{a}}.

\bibitem[Assran et~al.(2022{\natexlab{b}})Assran, Balestriero, Duval, Bordes, Misra, Bojanowski, Vincent, Rabbat, and Ballas]{assran2022hidden}
Mido Assran, Randall Balestriero, Quentin Duval, Florian Bordes, Ishan Misra, Piotr Bojanowski, Pascal Vincent, Michael Rabbat, and Nicolas Ballas.
\newblock The hidden uniform cluster prior in self-supervised learning.
\newblock In \emph{The Eleventh International Conference on Learning Representations}, 2022{\natexlab{b}}.

\bibitem[Bachman et~al.(2019)Bachman, Hjelm, and Buchwalter]{bachman2019learning}
Philip Bachman, R~Devon Hjelm, and William Buchwalter.
\newblock Learning representations by maximizing mutual information across views.
\newblock \emph{Advances in neural information processing systems}, 32, 2019.

\bibitem[Balestriero \& LeCun(2022)Balestriero and LeCun]{balestriero2022contrastive}
Randall Balestriero and Yann LeCun.
\newblock Contrastive and non-contrastive self-supervised learning recover global and local spectral embedding methods.
\newblock \emph{Advances in Neural Information Processing Systems}, 35:\penalty0 26671--26685, 2022.

\bibitem[Bengio et~al.(2013)Bengio, Courville, and Vincent]{bengio2013representation}
Yoshua Bengio, Aaron Courville, and Pascal Vincent.
\newblock Representation learning: A review and new perspectives.
\newblock \emph{IEEE transactions on pattern analysis and machine intelligence}, 35\penalty0 (8):\penalty0 1798--1828, 2013.

\bibitem[Beyer et~al.(2020)Beyer, H{\'e}naff, Kolesnikov, Zhai, and Oord]{beyer2020we}
Lucas Beyer, Olivier~J H{\'e}naff, Alexander Kolesnikov, Xiaohua Zhai, and A{\"a}ron van~den Oord.
\newblock Are we done with imagenet?
\newblock \emph{arXiv preprint arXiv:2006.07159}, 2020.

\bibitem[Boyd \& Vandenberghe(2004)Boyd and Vandenberghe]{boyd2004convex}
Stephen~P Boyd and Lieven Vandenberghe.
\newblock \emph{Convex optimization}.
\newblock Cambridge university press, 2004.

\bibitem[Bromley et~al.(1993)Bromley, Guyon, LeCun, S{\"a}ckinger, and Shah]{bromley1993signature}
Jane Bromley, Isabelle Guyon, Yann LeCun, Eduard S{\"a}ckinger, and Roopak Shah.
\newblock Signature verification using a" siamese" time delay neural network.
\newblock \emph{Advances in neural information processing systems}, 6, 1993.

\bibitem[Caron et~al.(2020)Caron, Misra, Mairal, Goyal, Bojanowski, and Joulin]{caron2020unsupervised}
Mathilde Caron, Ishan Misra, Julien Mairal, Priya Goyal, Piotr Bojanowski, and Armand Joulin.
\newblock Unsupervised learning of visual features by contrasting cluster assignments.
\newblock \emph{Advances in neural information processing systems}, 33:\penalty0 9912--9924, 2020.

\bibitem[Caron et~al.(2021)Caron, Touvron, Misra, J{\'e}gou, Mairal, Bojanowski, and Joulin]{caron2021emerging}
Mathilde Caron, Hugo Touvron, Ishan Misra, Herv{\'e} J{\'e}gou, Julien Mairal, Piotr Bojanowski, and Armand Joulin.
\newblock Emerging properties in self-supervised vision transformers.
\newblock In \emph{Proceedings of the IEEE/CVF international conference on computer vision}, pp.\  9650--9660, 2021.

\bibitem[Chechik et~al.(2010)Chechik, Sharma, Shalit, and Bengio]{chechik2010large}
Gal Chechik, Varun Sharma, Uri Shalit, and Samy Bengio.
\newblock Large scale online learning of image similarity through ranking.
\newblock \emph{Journal of Machine Learning Research}, 11\penalty0 (3), 2010.

\bibitem[Chen et~al.(2020{\natexlab{a}})Chen, Kornblith, Norouzi, and Hinton]{chen2020simple}
Ting Chen, Simon Kornblith, Mohammad Norouzi, and Geoffrey Hinton.
\newblock A simple framework for contrastive learning of visual representations.
\newblock In \emph{International conference on machine learning}, pp.\  1597--1607. PMLR, 2020{\natexlab{a}}.

\bibitem[Chen et~al.(2020{\natexlab{b}})Chen, Kornblith, Swersky, Norouzi, and Hinton]{chen2020big}
Ting Chen, Simon Kornblith, Kevin Swersky, Mohammad Norouzi, and Geoffrey~E Hinton.
\newblock Big self-supervised models are strong semi-supervised learners.
\newblock \emph{Advances in neural information processing systems}, 33:\penalty0 22243--22255, 2020{\natexlab{b}}.

\bibitem[Chen \& He(2021)Chen and He]{chen2021exploring}
Xinlei Chen and Kaiming He.
\newblock Exploring simple siamese representation learning.
\newblock In \emph{Proceedings of the IEEE/CVF conference on computer vision and pattern recognition}, pp.\  15750--15758, 2021.

\bibitem[Chopra et~al.(2005)Chopra, Hadsell, and LeCun]{chopra2005learning}
Sumit Chopra, Raia Hadsell, and Yann LeCun.
\newblock Learning a similarity metric discriminatively, with application to face verification.
\newblock In \emph{2005 IEEE computer society conference on computer vision and pattern recognition (CVPR'05)}, volume~1, pp.\  539--546. IEEE, 2005.

\bibitem[Chuang et~al.(2020)Chuang, Robinson, Lin, Torralba, and Jegelka]{chuang2020debiased}
Ching-Yao Chuang, Joshua Robinson, Yen-Chen Lin, Antonio Torralba, and Stefanie Jegelka.
\newblock Debiased contrastive learning.
\newblock \emph{Advances in neural information processing systems}, 33:\penalty0 8765--8775, 2020.

\bibitem[Cortes \& Vapnik(1995)Cortes and Vapnik]{cortes1995support}
Corinna Cortes and Vladimir Vapnik.
\newblock Support-vector networks.
\newblock \emph{Machine learning}, 20:\penalty0 273--297, 1995.

\bibitem[Deng et~al.(2009)Deng, Dong, Socher, Li, Li, and Fei-Fei]{deng2009imagenet}
Jia Deng, Wei Dong, Richard Socher, Li-Jia Li, Kai Li, and Li~Fei-Fei.
\newblock Imagenet: A large-scale hierarchical image database.
\newblock In \emph{2009 IEEE conference on computer vision and pattern recognition}, pp.\  248--255. Ieee, 2009.

\bibitem[Doersch et~al.(2015)Doersch, Gupta, and Efros]{doersch2015unsupervised}
Carl Doersch, Abhinav Gupta, and Alexei~A Efros.
\newblock Unsupervised visual representation learning by context prediction.
\newblock In \emph{Proceedings of the IEEE international conference on computer vision}, pp.\  1422--1430, 2015.

\bibitem[Faghri et~al.(2017)Faghri, Fleet, Kiros, and Fidler]{faghri2017improving}
Fartash Faghri, David~J Fleet, Jamie~Ryan Kiros, and S~Vse+ Fidler.
\newblock Improving visual-semantic embeddings with hard negatives.
\newblock \emph{arXiv preprint arXiv:1707.05612}, pp.\  7161--7170, 2017.

\bibitem[Garrido et~al.(2022)Garrido, Chen, Bardes, Najman, and Lecun]{garrido2022duality}
Quentin Garrido, Yubei Chen, Adrien Bardes, Laurent Najman, and Yann Lecun.
\newblock On the duality between contrastive and non-contrastive self-supervised learning.
\newblock \emph{arXiv preprint arXiv:2206.02574}, 2022.

\bibitem[Ghaoui(2014)]{ghaoui2014hyper}
LE~Ghaoui.
\newblock Hyper-textbook: Optimization models and applications, 2014.

\bibitem[Gidaris et~al.(2018)Gidaris, Singh, and Komodakis]{gidaris2018unsupervised}
Spyros Gidaris, Praveer Singh, and Nikos Komodakis.
\newblock Unsupervised representation learning by predicting image rotations.
\newblock \emph{arXiv preprint arXiv:1803.07728}, 2018.

\bibitem[Girshick(2015)]{girshick2015fast}
Ross Girshick.
\newblock Fast r-cnn.
\newblock In \emph{Proceedings of the IEEE international conference on computer vision}, pp.\  1440--1448, 2015.

\bibitem[Goodfellow et~al.(2016)Goodfellow, Bengio, Courville, and Bengio]{goodfellow2016deep}
Ian Goodfellow, Yoshua Bengio, Aaron Courville, and Yoshua Bengio.
\newblock \emph{Deep learning}, volume~1.
\newblock MIT press Cambridge, 2016.

\bibitem[Grill et~al.(2020)Grill, Strub, Altch{\'e}, Tallec, Richemond, Buchatskaya, Doersch, Avila~Pires, Guo, Gheshlaghi~Azar, et~al.]{grill2020bootstrap}
Jean-Bastien Grill, Florian Strub, Florent Altch{\'e}, Corentin Tallec, Pierre Richemond, Elena Buchatskaya, Carl Doersch, Bernardo Avila~Pires, Zhaohan Guo, Mohammad Gheshlaghi~Azar, et~al.
\newblock Bootstrap your own latent-a new approach to self-supervised learning.
\newblock \emph{Advances in neural information processing systems}, 33:\penalty0 21271--21284, 2020.

\bibitem[Gui et~al.(2023)Gui, Chen, Zhang, Cao, Sun, Luo, and Tao]{gui2023survey}
Jie Gui, Tuo Chen, Jing Zhang, Qiong Cao, Zhenan Sun, Hao Luo, and Dacheng Tao.
\newblock A survey on self-supervised learning: Algorithms, applications, and future trends, 2023.

\bibitem[HaoChen et~al.(2021)HaoChen, Wei, Gaidon, and Ma]{haochen2021provable}
Jeff~Z HaoChen, Colin Wei, Adrien Gaidon, and Tengyu Ma.
\newblock Provable guarantees for self-supervised deep learning with spectral contrastive loss.
\newblock \emph{Advances in neural information processing systems}, 34:\penalty0 5000--5011, 2021.

\bibitem[He et~al.(2016)He, Zhang, Ren, and Sun]{he2016deep}
Kaiming He, Xiangyu Zhang, Shaoqing Ren, and Jian Sun.
\newblock Deep residual learning for image recognition.
\newblock In \emph{Proceedings of the IEEE conference on computer vision and pattern recognition}, pp.\  770--778, 2016.

\bibitem[He et~al.(2020)He, Fan, Wu, Xie, and Girshick]{he2020momentum}
Kaiming He, Haoqi Fan, Yuxin Wu, Saining Xie, and Ross Girshick.
\newblock Momentum contrast for unsupervised visual representation learning.
\newblock In \emph{Proceedings of the IEEE/CVF conference on computer vision and pattern recognition}, pp.\  9729--9738, 2020.

\bibitem[Hjelm et~al.(2018)Hjelm, Fedorov, Lavoie-Marchildon, Grewal, Bachman, Trischler, and Bengio]{hjelm2018learning}
R~Devon Hjelm, Alex Fedorov, Samuel Lavoie-Marchildon, Karan Grewal, Phil Bachman, Adam Trischler, and Yoshua Bengio.
\newblock Learning deep representations by mutual information estimation and maximization.
\newblock \emph{arXiv preprint arXiv:1808.06670}, 2018.

\bibitem[Huang et~al.(2021)Huang, Yi, Zhao, and Jiang]{huang2021towards}
Weiran Huang, Mingyang Yi, Xuyang Zhao, and Zihao Jiang.
\newblock Towards the generalization of contrastive self-supervised learning.
\newblock \emph{arXiv preprint arXiv:2111.00743}, 2021.

\bibitem[Huang et~al.(2023)Huang, Chen, Wen, Zhang, Li, Wang, and Chen]{huang2023model}
Zizheng Huang, Haoxing Chen, Ziqi Wen, Chao Zhang, Huaxiong Li, Bo~Wang, and Chunlin Chen.
\newblock Model-aware contrastive learning: Towards escaping the dilemmas.
\newblock In \emph{International Conference on Machine Learning}, pp.\  13774--13790. PMLR, 2023.

\bibitem[Ioffe \& Szegedy(2015)Ioffe and Szegedy]{ioffe2015batch}
Sergey Ioffe and Christian Szegedy.
\newblock Batch normalization: Accelerating deep network training by reducing internal covariate shift.
\newblock In \emph{International conference on machine learning}, pp.\  448--456. pmlr, 2015.

\bibitem[Jiang et~al.(2024)Jiang, Nguyen, Ishwar, and Aeron]{jiang2024supervised}
Ruijie Jiang, Thuan Nguyen, Prakash Ishwar, and Shuchin Aeron.
\newblock Supervised contrastive learning with hard negative samples.
\newblock In \emph{2024 International Joint Conference on Neural Networks (IJCNN)}, pp.\  1--8. IEEE, 2024.

\bibitem[Kalantidis et~al.(2020)Kalantidis, Sariyildiz, Pion, Weinzaepfel, and Larlus]{kalantidis2020hard}
Yannis Kalantidis, Mert~Bulent Sariyildiz, Noe Pion, Philippe Weinzaepfel, and Diane Larlus.
\newblock Hard negative mixing for contrastive learning.
\newblock \emph{Advances in neural information processing systems}, 33:\penalty0 21798--21809, 2020.

\bibitem[Khosla et~al.(2020)Khosla, Teterwak, Wang, Sarna, Tian, Isola, Maschinot, Liu, and Krishnan]{khosla2020supervised}
Prannay Khosla, Piotr Teterwak, Chen Wang, Aaron Sarna, Yonglong Tian, Phillip Isola, Aaron Maschinot, Ce~Liu, and Dilip Krishnan.
\newblock Supervised contrastive learning.
\newblock \emph{Advances in neural information processing systems}, 33:\penalty0 18661--18673, 2020.

\bibitem[Krizhevsky et~al.(2009)Krizhevsky, Hinton, et~al.]{krizhevsky2009learning}
Alex Krizhevsky, Geoffrey Hinton, et~al.
\newblock Learning multiple layers of features from tiny images.(2009), 2009.

\bibitem[Kukleva et~al.(2023)Kukleva, B{\"o}hle, Schiele, Kuehne, and Rupprecht]{kukleva2023temperature}
Anna Kukleva, Moritz B{\"o}hle, Bernt Schiele, Hilde Kuehne, and Christian Rupprecht.
\newblock Temperature schedules for self-supervised contrastive methods on long-tail data.
\newblock \emph{arXiv preprint arXiv:2303.13664}, 2023.

\bibitem[LeCun et~al.(2015)LeCun, Bengio, and Hinton]{lecun2015deep}
Yann LeCun, Yoshua Bengio, and Geoffrey Hinton.
\newblock Deep learning.
\newblock \emph{nature}, 521\penalty0 (7553):\penalty0 436--444, 2015.

\bibitem[Lee et~al.(2021)Lee, Lei, Saunshi, and Zhuo]{lee2021predicting}
Jason~D Lee, Qi~Lei, Nikunj Saunshi, and Jiacheng Zhuo.
\newblock Predicting what you already know helps: Provable self-supervised learning.
\newblock \emph{Advances in Neural Information Processing Systems}, 34:\penalty0 309--323, 2021.

\bibitem[Li et~al.(2020)Li, Zhou, Xiong, and Hoi]{li2020prototypical}
Junnan Li, Pan Zhou, Caiming Xiong, and Steven~CH Hoi.
\newblock Prototypical contrastive learning of unsupervised representations.
\newblock \emph{arXiv preprint arXiv:2005.04966}, 2020.

\bibitem[Li et~al.(2021)Li, Pogodin, Sutherland, and Gretton]{li2021self}
Yazhe Li, Roman Pogodin, Danica~J Sutherland, and Arthur Gretton.
\newblock Self-supervised learning with kernel dependence maximization.
\newblock \emph{Advances in Neural Information Processing Systems}, 34:\penalty0 15543--15556, 2021.

\bibitem[Loshchilov \& Hutter(2016)Loshchilov and Hutter]{loshchilov2016sgdr}
Ilya Loshchilov and Frank Hutter.
\newblock Sgdr: Stochastic gradient descent with warm restarts.
\newblock \emph{arXiv preprint arXiv:1608.03983}, 2016.

\bibitem[Manna et~al.(2021)Manna, Pal, and Bhattacharya]{manna2021mio}
Siladittya Manna, Umapada Pal, and Saumik Bhattacharya.
\newblock Mio: Mutual information optimization using self-supervised binary contrastive learning.
\newblock \emph{arXiv preprint arXiv:2111.12664}, 2021.

\bibitem[Manna et~al.(2025)Manna, Chattopadhyay, Dey, Pal, and Bhattacharya]{manna2025dynamically}
Siladittya Manna, Soumitri Chattopadhyay, Rakesh Dey, Umapada Pal, and Saumik Bhattacharya.
\newblock Dynamically scaled temperature in self-supervised contrastive learning.
\newblock \emph{IEEE Transactions on Artificial Intelligence}, 2025.

\bibitem[Noroozi \& Favaro(2016)Noroozi and Favaro]{noroozi2016unsupervised}
Mehdi Noroozi and Paolo Favaro.
\newblock Unsupervised learning of visual representations by solving jigsaw puzzles.
\newblock In \emph{European conference on computer vision}, pp.\  69--84. Springer, 2016.

\bibitem[Oh~Song et~al.(2016)Oh~Song, Xiang, Jegelka, and Savarese]{oh2016deep}
Hyun Oh~Song, Yu~Xiang, Stefanie Jegelka, and Silvio Savarese.
\newblock Deep metric learning via lifted structured feature embedding.
\newblock In \emph{Proceedings of the IEEE conference on computer vision and pattern recognition}, pp.\  4004--4012, 2016.

\bibitem[Oord et~al.(2018)Oord, Li, and Vinyals]{oord2018representation}
Aaron van~den Oord, Yazhe Li, and Oriol Vinyals.
\newblock Representation learning with contrastive predictive coding.
\newblock \emph{arXiv preprint arXiv:1807.03748}, 2018.

\bibitem[Oquab et~al.(2023)Oquab, Darcet, Moutakanni, Vo, Szafraniec, Khalidov, Fernandez, Haziza, Massa, El-Nouby, et~al.]{oquab2023dinov2}
Maxime Oquab, Timoth{\'e}e Darcet, Th{\'e}o Moutakanni, Huy Vo, Marc Szafraniec, Vasil Khalidov, Pierre Fernandez, Daniel Haziza, Francisco Massa, Alaaeldin El-Nouby, et~al.
\newblock Dinov2: Learning robust visual features without supervision.
\newblock \emph{arXiv preprint arXiv:2304.07193}, 2023.

\bibitem[Ozbulak et~al.(2023)Ozbulak, Lee, Boga, Anzaku, Park, Van~Messem, De~Neve, and Vankerschaver]{ozbulak2023know}
Utku Ozbulak, Hyun~Jung Lee, Beril Boga, Esla~Timothy Anzaku, Homin Park, Arnout Van~Messem, Wesley De~Neve, and Joris Vankerschaver.
\newblock Know your self-supervised learning: A survey on image-based generative and discriminative training.
\newblock \emph{arXiv preprint arXiv:2305.13689}, 2023.

\bibitem[Schiappa et~al.(2023)Schiappa, Rawat, and Shah]{schiappa2023self}
Madeline~C Schiappa, Yogesh~S Rawat, and Mubarak Shah.
\newblock Self-supervised learning for videos: A survey.
\newblock \emph{ACM Computing Surveys}, 55\penalty0 (13s):\penalty0 1--37, 2023.

\bibitem[Schroff et~al.(2015)Schroff, Kalenichenko, and Philbin]{schroff2015facenet}
Florian Schroff, Dmitry Kalenichenko, and James Philbin.
\newblock Facenet: A unified embedding for face recognition and clustering.
\newblock In \emph{Proceedings of the IEEE conference on computer vision and pattern recognition}, pp.\  815--823, 2015.

\bibitem[Schultz \& Joachims(2003)Schultz and Joachims]{schultz2003learning}
Matthew Schultz and Thorsten Joachims.
\newblock Learning a distance metric from relative comparisons.
\newblock \emph{Advances in neural information processing systems}, 16, 2003.

\bibitem[Shankar et~al.(2020)Shankar, Roelofs, Mania, Fang, Recht, and Schmidt]{shankar2020evaluating}
Vaishaal Shankar, Rebecca Roelofs, Horia Mania, Alex Fang, Benjamin Recht, and Ludwig Schmidt.
\newblock Evaluating machine accuracy on imagenet.
\newblock In \emph{International Conference on Machine Learning}, pp.\  8634--8644. PMLR, 2020.

\bibitem[Sohn(2016)]{sohn2016improved}
Kihyuk Sohn.
\newblock Improved deep metric learning with multi-class n-pair loss objective.
\newblock \emph{Advances in neural information processing systems}, 29, 2016.

\bibitem[Tarvainen \& Valpola(2017)Tarvainen and Valpola]{tarvainen2017mean}
Antti Tarvainen and Harri Valpola.
\newblock Mean teachers are better role models: Weight-averaged consistency targets improve semi-supervised deep learning results.
\newblock \emph{Advances in neural information processing systems}, 30, 2017.

\bibitem[Tian et~al.(2020)Tian, Yu, Chen, and Ganguli]{tian2020understanding}
Yuandong Tian, Lantao Yu, Xinlei Chen, and Surya Ganguli.
\newblock Understanding self-supervised learning with dual deep networks.
\newblock \emph{arXiv preprint arXiv:2010.00578}, 2020.

\bibitem[Tian et~al.(2021)Tian, Chen, and Ganguli]{tian2021understanding}
Yuandong Tian, Xinlei Chen, and Surya Ganguli.
\newblock Understanding self-supervised learning dynamics without contrastive pairs.
\newblock In \emph{International Conference on Machine Learning}, pp.\  10268--10278. PMLR, 2021.

\bibitem[Vasudevan et~al.(2022)Vasudevan, Caine, Gontijo~Lopes, Fridovich-Keil, and Roelofs]{vasudevan2022does}
Vijay Vasudevan, Benjamin Caine, Raphael Gontijo~Lopes, Sara Fridovich-Keil, and Rebecca Roelofs.
\newblock When does dough become a bagel? analyzing the remaining mistakes on imagenet.
\newblock \emph{Advances in Neural Information Processing Systems}, 35:\penalty0 6720--6734, 2022.

\bibitem[Wang \& Liu(2021)Wang and Liu]{wang2021understanding}
Feng Wang and Huaping Liu.
\newblock Understanding the behaviour of contrastive loss.
\newblock In \emph{Proceedings of the IEEE/CVF conference on computer vision and pattern recognition}, pp.\  2495--2504, 2021.

\bibitem[Wang \& Isola(2020)Wang and Isola]{wang2020understanding}
Tongzhou Wang and Phillip Isola.
\newblock Understanding contrastive representation learning through alignment and uniformity on the hypersphere.
\newblock In \emph{International conference on machine learning}, pp.\  9929--9939. PMLR, 2020.

\bibitem[Wang et~al.(2022)Wang, Fan, Tian, Kihara, and Chen]{wang2022importance}
Xiao Wang, Haoqi Fan, Yuandong Tian, Daisuke Kihara, and Xinlei Chen.
\newblock On the importance of asymmetry for siamese representation learning.
\newblock In \emph{Proceedings of the IEEE/CVF conference on computer vision and pattern recognition}, pp.\  16570--16579, 2022.

\bibitem[Weinberger \& Saul(2009)Weinberger and Saul]{weinberger2009distance}
Kilian~Q Weinberger and Lawrence~K Saul.
\newblock Distance metric learning for large margin nearest neighbor classification.
\newblock \emph{Journal of machine learning research}, 10\penalty0 (2), 2009.

\bibitem[Wu et~al.(2018)Wu, Xiong, Yu, and Lin]{wu2018unsupervised}
Zhirong Wu, Yuanjun Xiong, Stella~X Yu, and Dahua Lin.
\newblock Unsupervised feature learning via non-parametric instance discrimination.
\newblock In \emph{Proceedings of the IEEE conference on computer vision and pattern recognition}, pp.\  3733--3742, 2018.

\bibitem[Xiao et~al.(2020)Xiao, Wang, Efros, and Darrell]{xiao2020should}
Tete Xiao, Xiaolong Wang, Alexei~A Efros, and Trevor Darrell.
\newblock What should not be contrastive in contrastive learning.
\newblock \emph{arXiv preprint arXiv:2008.05659}, 2020.

\bibitem[Yeh et~al.(2022)Yeh, Hong, Hsu, Liu, Chen, and LeCun]{yeh2022decoupled}
Chun-Hsiao Yeh, Cheng-Yao Hong, Yen-Chi Hsu, Tyng-Luh Liu, Yubei Chen, and Yann LeCun.
\newblock Decoupled contrastive learning.
\newblock In \emph{European conference on computer vision}, pp.\  668--684. Springer, 2022.

\bibitem[You et~al.(2017)You, Gitman, and Ginsburg]{you2017large}
Yang You, Igor Gitman, and Boris Ginsburg.
\newblock Large batch training of convolutional networks.
\newblock \emph{arXiv preprint arXiv:1708.03888}, 2017.

\bibitem[Zbontar et~al.(2021)Zbontar, Jing, Misra, LeCun, and Deny]{zbontar2021barlow}
Jure Zbontar, Li~Jing, Ishan Misra, Yann LeCun, and St{\'e}phane Deny.
\newblock Barlow twins: Self-supervised learning via redundancy reduction.
\newblock In \emph{International conference on machine learning}, pp.\  12310--12320. PMLR, 2021.

\bibitem[Zhang et~al.(2022)Zhang, Zhang, Pham, Niu, Qiao, Yoo, and Kweon]{zhang2022dual}
Chaoning Zhang, Kang Zhang, Trung~X Pham, Axi Niu, Zhinan Qiao, Chang~D Yoo, and In~So Kweon.
\newblock Dual temperature helps contrastive learning without many negative samples: Towards understanding and simplifying moco.
\newblock In \emph{Proceedings of the IEEE/CVF conference on computer vision and pattern recognition}, pp.\  14441--14450, 2022.

\bibitem[Zhang et~al.(2016)Zhang, Isola, and Efros]{zhang2016colorful}
Richard Zhang, Phillip Isola, and Alexei~A Efros.
\newblock Colorful image colorization.
\newblock In \emph{European conference on computer vision}, pp.\  649--666. Springer, 2016.

\bibitem[Zhang et~al.(2024)Zhang, Zhu, Song, Chen, Fu, Meng, Koniusz, and King]{zhang2024geometric}
Yifei Zhang, Hao Zhu, Zixing Song, Yankai Chen, Xinyu Fu, Ziqiao Meng, Piotr Koniusz, and Irwin King.
\newblock Geometric view of soft decorrelation in self-supervised learning.
\newblock In \emph{Proceedings of the 30th ACM SIGKDD Conference on Knowledge Discovery and Data Mining}, pp.\  4338--4349, 2024.

\bibitem[Zhou et~al.(2022)Zhou, Yao, Wang, Han, and Zhang]{zhou2022contrastive}
Zhihan Zhou, Jiangchao Yao, Yan-Feng Wang, Bo~Han, and Ya~Zhang.
\newblock Contrastive learning with boosted memorization.
\newblock In \emph{International Conference on Machine Learning}, pp.\  27367--27377. PMLR, 2022.

\end{thebibliography}
\bibliographystyle{tmlr}

\newpage

\appendix
\section{Appendix}


\subsection{Proofs}
\label{subsec:proofs}

This subsection presents the proofs of Theorem \ref{thm:attracting} and Theorem \ref{thm:repelling}.

\subsubsection{Proof of Theorem \ref{thm:attracting}}
\label{subsubsec:proof_attracting}

We restate the assumptions and the theorem and provide the proof below.

\similarity*

\normalization*

\technical*

\attracting*

\begin{proof}
    \begin{align}
        -s\left(f_{\theta}(t(x)), \mathbb{E}_{T}f_{\theta}(T(x))\right) & \overset{(i)}{=} -\frac{f_{\theta}(t(x)) \cdot \mathbb{E}_{T}f_{\theta}(T(x))}{\| f_{\theta}(t(x)) \| \| \mathbb{E}_{T}f_{\theta}(T(x)) \|} \\
        & \overset{(ii)}{=} -\frac{f_{\theta}(t(x)) \cdot \mathbb{E}_{T}f_{\theta}(T(x))}{\| \mathbb{E}_{T}f_{\theta}(T(x)) \|} \\
        & \overset{(iii)}{\leq} -\frac{f_{\theta}(t(x)) \cdot \mathbb{E}_{T}f_{\theta}(T(x))}{\mathbb{E}_{T}\| f_{\theta}(T(x)) \|} \\
        & \overset{(iv)}{=} -f_{\theta}(t(x)) \cdot \mathbb{E}_{T}f_{\theta}(T(x)) \\
        & \overset{(v)}{=} -\mathbb{E}_{T}\left[f_{\theta}(t(x)) \cdot f_{\theta}(T(x))\right] \\
        & \overset{(vi)}{=} -\mathbb{E}_{T}\left[\frac{f_{\theta}(t(x)) \cdot f_{\theta}(T(x))}{\| f_{\theta}(t(x)) \|\| f_{\theta}(T(x)) \|}\right] \\
        & \overset{(vii)}{=} -\mathbb{E}_{T}s\left(f_{\theta}(t(x)), f_{\theta}(T(x))\right)
    \end{align}
    where $(i)$ and $(vii)$ are by Assumption \ref{assump:cosine_similarity}, $(ii)$, $(iv)$, and $(vi)$ are by Assumption \ref{assump:l2_normalization}, $(iii)$ is by Assumption \ref{assump:technical_assumption}, the convexity of $l^2$-norm \citep{boyd2004convex}, and Jensen's inequality, and $(v)$ is by the linearity of expectation. This completes the proof of Theorem \ref{thm:attracting}.
\end{proof}

\subsubsection{Proof of Theorem \ref{thm:repelling}}
\label{subsubsec:proof_repelling}

Before we prove Theorem \ref{thm:repelling}, we need three additional lemmas. While the proofs of the lemmas are straightforward, they are not readily available in the existing literature. Therefore, we provide them here for the sake of self-containedness.

\begin{lemma}
\label{lem:logsumexp_1}
    For $\alpha > 0$ and $x_i \in \mathbb{R}$, $i=1,2,\dots,n$,
    \begin{equation}
        \max_{i=1,\dotsc,n} x_i \leq (1/\alpha)\log\sum_{i=1}^{n}\exp(\alpha x_i) \leq \max_{i=1,\dotsc,n} x_i + \frac{\log{n}}{\alpha},
    \end{equation}
    where the equalities hold when $\alpha$ goes to infinity.
\end{lemma}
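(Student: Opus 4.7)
The plan is to sandwich the log-sum-exp expression between two quantities that differ only by the explicit gap $(\log n)/\alpha$, so both the inequalities and the limiting equality fall out of the same elementary estimate. First I would set $M := \max_{i=1,\dots,n} x_i$ and invoke monotonicity of $\exp$ (valid since $\alpha>0$) to get the pointwise bound $\exp(\alpha x_i) \leq \exp(\alpha M)$ for every $i$. Summing over $i$ gives $\sum_{i=1}^{n} \exp(\alpha x_i) \leq n\exp(\alpha M)$, and applying $(1/\alpha)\log(\cdot)$, which is monotone increasing for $\alpha>0$, produces the upper bound $(1/\alpha)\log\sum_{i=1}^{n}\exp(\alpha x_i) \leq M + (\log n)/\alpha$.

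For the lower bound, I would use the converse one-sided observation: since every $\exp(\alpha x_i)$ is positive and $\exp(\alpha M)$ appears among the summands (at the index achieving the maximum), we have $\sum_{i=1}^{n} \exp(\alpha x_i) \geq \exp(\alpha M)$. Taking $(1/\alpha)\log$ on both sides yields $(1/\alpha)\log\sum_{i=1}^{n}\exp(\alpha x_i) \geq M$, which is exactly the left-hand inequality in the statement.

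Finally, for the equality-in-the-limit claim, I would simply note that as $\alpha \to \infty$, the gap $(\log n)/\alpha \to 0$, so the upper and lower bounds both converge to $M$, and the squeeze forces $(1/\alpha)\log\sum_{i=1}^{n}\exp(\alpha x_i) \to M$. There is no real obstacle here; the proof is a textbook sandwich argument relying only on positivity of $\alpha$, monotonicity of $\exp$ and $\log$, and the trivial bounds on a finite sum of positive terms by its maximum.
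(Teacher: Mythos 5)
Your argument is correct and is essentially the same as the paper's: both proofs sandwich $\sum_{i=1}^{n}\exp(\alpha x_i)$ between $\exp(\alpha M)$ and $n\exp(\alpha M)$, then apply $(1/\alpha)\log(\cdot)$. The only addition on your end is spelling out the squeeze argument for the $\alpha\to\infty$ limit, which the paper leaves implicit.
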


\begin{proof}
    We have
    \begin{equation}
        \exp{\left(\max_{i=1,\dotsc,n} (\alpha x_i)\right)} \leq \sum_{i=1}^{n} \exp{(\alpha x_i)} \leq n \exp{\left(\max_{i=1,\dotsc,n} (\alpha x_i)\right)}.
    \end{equation}
    Since $\alpha > 0$,
    \begin{equation}
        \alpha \max_{i=1,\dotsc,n} x_i \leq \log{\sum_{i=1}^{n} \exp{(\alpha x_i)}} \leq \alpha \max_{i=1,\dotsc,n} x_i + \log{n}.
    \end{equation}
    This completes the proof of Lemma \ref{lem:logsumexp_1}.
\end{proof}

\begin{lemma}
\label{lem:logsumexp_2}
    For $\alpha > 0$ and $x_i \in \mathbb{R}$, $i=1,2,\dots,n$, 
    \begin{equation}
        u(x_1,\dotsc,x_n) := (1/\alpha)\log\sum_{i=1}^{n}\exp(\alpha x_i)
    \end{equation}
    is convex on $\mathbb{R}^n$.
\end{lemma}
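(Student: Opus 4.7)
The plan is to reduce to the standard log-sum-exp case (where $\alpha=1$) and then use basic closure properties of convexity. Define $g(y_1,\dots,y_n) := \log\sum_{i=1}^n \exp(y_i)$, so that $u(x) = (1/\alpha)\, g(\alpha x)$. Composition with the linear map $x \mapsto \alpha x$ preserves convexity, and multiplication by the positive scalar $1/\alpha$ preserves convexity, so convexity of $u$ on $\mathbb{R}^n$ follows at once from convexity of $g$ on $\mathbb{R}^n$. Hence the whole task reduces to showing that $g$ is convex.

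For that, I would compute the Hessian of $g$ directly and show it is positive semidefinite. A routine differentiation gives
\begin{equation}
    \frac{\partial g}{\partial y_i} = p_i, \qquad \frac{\partial^2 g}{\partial y_i\, \partial y_j} = p_i\, \delta_{ij} - p_i p_j,
\end{equation}
where $p_i := \exp(y_i) / \sum_{k=1}^n \exp(y_k)$ and $\delta_{ij}$ is the Kronecker delta. Writing $p = (p_1,\dots,p_n)^{\top}$, the Hessian is $H(y) = \mathrm{diag}(p) - p p^{\top}$, and $p$ lies in the probability simplex.

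To show $H(y) \succeq 0$, for any $v \in \mathbb{R}^n$ I would compute
\begin{equation}
    v^{\top} H(y)\, v = \sum_{i=1}^n p_i v_i^2 - \Bigl( \sum_{i=1}^n p_i v_i \Bigr)^2,
\end{equation}
which is exactly the variance of $v$ under the distribution $p$, hence nonnegative by the Cauchy--Schwarz inequality (equivalently, by Jensen's inequality applied to the convex function $t \mapsto t^2$). This gives convexity of $g$, and the conclusion for $u$ follows by the reduction above.

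I do not anticipate a serious obstacle; the only step that needs a little care is recognizing the quadratic form $v^{\top} H v$ as a variance and invoking Cauchy--Schwarz. An alternative route would be to apply Hölder's inequality directly to $g(\theta y + (1-\theta) z)$, but the Hessian argument is cleaner and makes the positive-scalar and affine-composition reductions entirely transparent.
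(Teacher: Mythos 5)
Your reduction to the standard log-sum-exp function $g$ via $u(x) = (1/\alpha)\,g(\alpha x)$, followed by closure of convexity under composition with an affine (here linear) map and multiplication by a positive scalar, is exactly the route the paper takes. Where you diverge is in the final ingredient: the paper simply cites the convexity of $g$ to a reference (Boyd \& Vandenberghe), whereas you supply a self-contained proof via the Hessian $H(y) = \mathrm{diag}(p) - p p^{\top}$ and the observation that $v^{\top} H v = \sum_i p_i v_i^2 - (\sum_i p_i v_i)^2$ is a variance under the simplex weights $p$, hence nonnegative. Your computation is correct (the diagonal derivative is $p_i - p_i^2$ and the off-diagonal is $-p_i p_j$, matching $p_i\delta_{ij} - p_i p_j$), and the variance identification is the cleanest way to see positive semidefiniteness. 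The trade-off is the usual one: the paper's citation keeps the appendix short, while your Hessian argument makes the lemma genuinely self-contained at the cost of a few extra lines. Either is perfectly acceptable; the structural reduction, which is the actual content of the lemma, is identical in both.
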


\begin{proof}
    Note that the log-sum-exp function $v(x_1,\dotsc,x_n) := \log\sum_{i=1}^{n}\exp(x_i)$ is convex on $\mathbb{R}^n$ \citep{boyd2004convex, ghaoui2014hyper}. $u(x_1,\dotsc,x_n) = (1/\alpha)v(\alpha(x_1,\dotsc,x_n))$, and composition with an affine mapping preserves convexity \citep{boyd2004convex}. Thus, $u(x_1,\dotsc,x_n)$ is also convex on $\mathbb{R}^n$. This completes the proof of Lemma \ref{lem:logsumexp_2}.
\end{proof}

\begin{lemma}
\label{lem:max_product}
    If $g_1(x) \geq 0$ for all $x$, and $g_2(x) \geq 0$ for some $x$, then
    \begin{equation}
        \max [g_1(x) g_2(x)] \leq \max [g_1(x)] \max [g_2(x)].
    \end{equation}
\end{lemma}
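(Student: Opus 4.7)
The plan is to prove the inequality pointwise and then take the maximum on both sides. I will pick an arbitrary $x$ and split into two sign cases for $g_2(x)$, exploiting the fact that $g_1(x) \geq 0$ everywhere and that the hypothesis on $g_2$ guarantees $\max g_2 \geq 0$ (and similarly $\max g_1 \geq 0$).

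First I would handle the case $g_2(x) \geq 0$. Here both factors $g_1(x)$ and $g_2(x)$ are nonnegative, so I can monotonically bound each factor by its maximum: $g_1(x) g_2(x) \leq (\max g_1)\, g_2(x) \leq (\max g_1)(\max g_2)$, where the first step uses $g_2(x) \geq 0$ with $g_1(x) \leq \max g_1$, and the second uses $\max g_1 \geq 0$ with $g_2(x) \leq \max g_2$. For the case $g_2(x) < 0$, the product $g_1(x) g_2(x)$ is non-positive (since $g_1(x) \geq 0$), while the right-hand side $(\max g_1)(\max g_2)$ is nonnegative by the two observations above, so the inequality is immediate.

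Combining the two cases gives $g_1(x) g_2(x) \leq (\max g_1)(\max g_2)$ for every $x$, and taking the supremum over $x$ on the left yields the claim. The only mildly delicate point, and therefore the main thing to be careful about, is making sure the sign hypotheses actually do what one needs at every step: in particular one must use the hypothesis "$g_2(x) \geq 0$ for some $x$" precisely to conclude $\max g_2 \geq 0$, which in turn is what rescues the second case and legitimizes the monotonicity step in the first case. Beyond this book-keeping, no additional machinery is required.
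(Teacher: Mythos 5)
Your proof is correct. Both you and the paper establish the pointwise inequality $g_1(x)g_2(x)\le(\max g_1)(\max g_2)$ and then take the supremum, but the bookkeeping differs in a small but instructive way: you bound $g_1(x)$ by $\max g_1$ first, which is only a valid multiplicative step when $g_2(x)\ge 0$, so you are forced into a case split on the sign of $g_2(x)$. The paper instead bounds $g_2(x)$ by $\max g_2$ first, multiplying by $g_1(x)\ge 0$ — a step that is valid for \emph{every} $x$ because nonnegativity of $g_1$ is assumed everywhere, not just somewhere. This ordering eliminates the case split entirely: one gets $g_1(x)g_2(x)\le g_1(x)\max g_2$ pointwise, takes the max, and then uses $\max g_2\ge 0$ (from the second hypothesis) to pull the constant out. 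Your version is perfectly sound and makes the sign bookkeeping very explicit, which is pedagogically clear; the paper's version is shorter because it chooses the factor to bound first so that the weaker hypothesis on $g_2$ is needed only once, at the very end.
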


\begin{proof}
    By default, $g_2(x) \leq \max [g_2(x)]$. Since $g_1(x) \geq 0$ for all $x$, $g_1(x)g_2(x) \leq g_1(x)\max [g_2(x)]$. Taking the maximum of both sides, we have $\max [g_1(x)g_2(x)] \leq \max[g_1(x)\max [g_2(x)]]$. Since $g_2(x) \geq 0$ for some $x$, $\max [g_2(x)] \geq 0$, and thus $\max [g_1(x) g_2(x)] \leq \max [g_1(x)] \max [g_2(x)]$. This completes the proof of Lemma \ref{lem:max_product}.
\end{proof}

Now, we are ready to prove Theorem \ref{thm:repelling}. We restate the assumption and the theorem and provide the proof below.

\balanced*


\repelling*

\begin{proof}
    \begin{align}
        \max_{y' \neq y} s\left(f_{\theta}(t(x)), \mathbb{E}_{T', X' \vert y'}f_{\theta}(T'(X'))\right) & \overset{(i)}{=} \max_{y' \neq y} \frac{f_{\theta}(t(x)) \cdot \mathbb{E}_{T', X' \vert y'}f_{\theta}(T'(X'))}{\| f_{\theta}(t(x)) \| \| \mathbb{E}_{T', X' \vert y'}f_{\theta}(T'(X')) \|}\\
         & \overset{(ii)}{=} \max_{y' \neq y} \frac{f_{\theta}(t(x)) \cdot \mathbb{E}_{T', X' \vert y'}f_{\theta}(T'(X'))}{\| \mathbb{E}_{T', X' \vert y'}f_{\theta}(T'(X')) \|} \label{eq:maximum}\\
         & \overset{(iii)}{\leq} \frac{1}{\nu} \max_{y' \neq y} \mathbb{E}_{T', X' \vert y'} s\left(f_{\theta}(t(x)), f_{\theta}(T'(X'))\right)
    \end{align}

    where $(i)$ is by Assumption \ref{assump:cosine_similarity}, $(ii)$ is by Assumption \ref{assump:l2_normalization}, and $(iii)$ is by the following argument.
    

    Let $y^*$ be the label that achieves the maximum in Equation (\ref{eq:maximum}). Note that under Assumption \ref{assump:l2_normalization}, $0 < \| \mathbb{E}_{T', X' \vert y'}f_{\theta}(T'(X')) \| \leq 1$. If in an ideal case, $f_{\theta}(t'(x'))$ produces the same representation for every $t'(x')$ that shares the same label $y'$, then $\| \mathbb{E}_{T', X' \vert y'}f_{\theta}(T'(X')) \| = \| f_{\theta}(t'(x')) \| = 1$. To show $(iii)$, we proceed by considering the following two cases.

      \begin{case}
        If $f_{\theta}(t(x)) \cdot \mathbb{E}_{T', X' \vert y^*}f_{\theta}(T'(X')) \leq 0$, then
        \begin{align}
            \frac{f_{\theta}(t(x)) \cdot \mathbb{E}_{T', X' \vert y*}f_{\theta}(T'(X'))}{\| \mathbb{E}_{T', X' \vert y*}f_{\theta}(T'(X')) \|} 
            & \overset{(i)}{\leq} \frac{f_{\theta}(t(x)) \cdot \mathbb{E}_{T', X' \vert y*}f_{\theta}(T'(X'))}{\mathbb{E}_{T', X' \vert y*}\| f_{\theta}(T'(X')) \|} \\
            & \overset{(ii)}{=} f_{\theta}(t(x)) \cdot \mathbb{E}_{T', X' \vert y*}f_{\theta}(T'(X')) \\
            & \overset{(iii)}{=} \mathbb{E}_{T', X' \vert y*} s(f_{\theta}(t(x)), f_{\theta}(T'(X'))) \\
            & \leq \max_{y' \neq y} \mathbb{E}_{T', X' \vert y'} s(f_{\theta}(t(x)), f_{\theta}(T'(X'))) \\
            & \overset{(iv)}{\leq} \frac{1}{\nu} \max_{y' \neq y} \mathbb{E}_{T', X' \vert y'} s(f_{\theta}(t(x)), f_{\theta}(T'(X')))
        \end{align}
        where $(i)$ is by Jensen's inequality, $(ii)$ is by Assumption \ref{assump:l2_normalization}, $(iii)$ is by a similar argument in the proof of Theorem \ref{thm:attracting}, and $(iv)$ follows from the fact that $0 < \nu \leq 1$.
      \end{case}
      
      \begin{case}
        If $f_{\theta}(t(x)) \cdot \mathbb{E}_{T', X' \vert y^*}f_{\theta}(T'(X')) > 0$, then
        \begin{align}
            \frac{f_{\theta}(t(x)) \cdot \mathbb{E}_{T', X' \vert y*}f_{\theta}(T'(X'))}{\| \mathbb{E}_{T', X' \vert y*}f_{\theta}(T'(X')) \|}
            & \overset{(i)}{\leq} \max_{y' \neq y} \frac{1}{\| \mathbb{E}_{T', X' \vert y'}f_{\theta}(T'(X')) \|} \max_{y' \neq y} \left[ f_{\theta}(t(x)) \cdot \mathbb{E}_{T', X' \vert y'}f_{\theta}(T'(X')) \right] \\
            & = \frac{1}{\nu} \max_{y' \neq y} \left[ f_{\theta}(t(x)) \cdot \mathbb{E}_{T', X' \vert y'}f_{\theta}(T'(X')) \right] \\
            & \overset{(ii)}{=} \frac{1}{\nu} \max_{y' \neq y} \mathbb{E}_{T', X' \vert y'} s(f_{\theta}(t(x)), f_{\theta}(T'(X')))
        \end{align}
        where $(i)$ is by Lemma \ref{lem:max_product}, and $(ii)$ is by a similar argument in the proof of Theorem \ref{thm:attracting}.
      \end{case}

    Now for brevity, let $g(T'(X')) := s\left(f_{\theta}(t(x)), f_{\theta}(T'(X'))\right)$. Then,
    \begin{align}
        \max_{y' \neq y} \mathbb{E}_{T', X' \vert y'}g(T'(X')) 
        & \overset{(i)}{\leq} \frac{1}{\alpha}\log\sum_{y' \neq y}\exp\left(\alpha \mathbb{E}_{T', X' \vert y'}g(T'(X'))\right) \\
        & \overset{(ii)}{\leq} \frac{1}{\alpha}\log\sum_{y'}\exp\left(\alpha \mathbb{E}_{T', X' \vert y'}g(T'(X'))\right) \\
        & = \frac{1}{\alpha}\log\sum_{y'}\exp\left(\alpha \mathbb{E}_{T'}\mathbb{E}_{X' \vert y'}g(T'(X'))\right) \\
        & \overset{(iii)}{\leq} \mathbb{E}_{T'}\left[\frac{1}{\alpha}\log\sum_{y'}\exp\left(\alpha \mathbb{E}_{X' \vert y'}g(T'(X'))\right)\right] \\
        & \overset{(iv)}{\leq} \mathbb{E}_{T'}\left[\frac{1}{\alpha}\log\sum_{y'}\mathbb{E}_{X' \vert y'}\exp\left(\alpha g(T'(X'))\right)\right] \\
        & \overset{(v)}{=} \mathbb{E}_{T'}\left[\frac{1}{\alpha}\log\left(n\sum_{y'}p(y')\mathbb{E}_{X' \vert y'}\exp\left(\alpha g(T'(X'))\right)\right)\right] \\
        & = \mathbb{E}_{T'}\left[\frac{1}{\alpha}\log\left( n\mathbb{E}_{Y'} \mathbb{E}_{X' \vert Y'}\exp\left(\alpha g(T'(X'))\right)\right)\right] \\
        & = \mathbb{E}_{T'}\left[\frac{1}{\alpha}\log\left( n \mathbb{E}_{X'}\exp\left(\alpha g(T'(X'))\right)\right)\right] \\
        & = \mathbb{E}_{T'}\left[\frac{1}{\alpha}\log\left(\mathbb{E}_{X'}\exp\left(\alpha g(T'(X'))\right)\right)\right] + \frac{1}{\alpha}\log n.
    \end{align}
    where $(i)$ is by Lemma \ref{lem:logsumexp_1}, $(ii)$ is by the positivity of $\exp(\alpha x)$ and the monotonicity of $\log(x)$, $(iii)$ is by Lemma \ref{lem:logsumexp_2} and Jensen's inequality, $(iv)$ is by the convexity of $\exp(\alpha x)$, Jensen's inequality, and the monotonicity of $\log(x)$, and $(v)$ is by Assumption \ref{assump:balanced_dataset}. This completes the proof of Theorem \ref{thm:repelling}.
\end{proof}


\subsection{Cross-reference}
\label{subsec:cross_reference}

Table \ref{tab:translation} shows how each component of SimCLR corresponds to specific parts of our problem formulation and theoretical derivation. 

\begin{table*}[h]
  \caption{Cross-reference between SimCLR and our framework. We compare the key components and provide references to the corresponding sections and theorems.}
  \label{tab:translation}
  \centering
  \begin{tabular}{lll}
    Component & SimCLR & Our framework \\
    \midrule
    Architecture & Siamese network & Subsection \ref{subsec:attracting_loss} and \ref{subsec:repelling_loss} \\
    Loss & NT-Xent & Subsection \ref{subsec:total_loss} \\
    Data augmentation & debiased prototype representation & Subsection \ref{subsec:self_supervised_representation_learning_problem} \\
    Similarity measure & cosine similarity with normalization & Theorem \ref{thm:attracting} and \ref{thm:repelling} \\
    Dataset & balanced class distribution & Theorem \ref{thm:repelling} \\
  \end{tabular}
\end{table*}

\vspace{-0.3cm}


\subsection{Implementation details}
\label{subsec:implementation_details}

This subsection offers a comprehensive description of the implementation details for our experiments. Readers can also refer to the code provided in the supplementary material. With 8 NVIDIA V100 GPUs, the pretraining takes about 2.5 days and 13 GB peak memory usage, the linear evaluation takes about 1.5 days and 8 GB peak memory usage, and the $k$-nearest neighbors takes about 40 minutes and 30 GB peak memory usage.

\subsubsection{Base setting}
\label{subsubsec:base_setting}

\paragraph{Dataset} We use ImageNet as the benchmark dataset, as it is one of the most representative large-scale image datasets. The training set comprises 1,281,167 images, while the validation set comprises 50,000 images. As ImageNet's test set labels are unavailable, we utilize the validation set as a test set for evaluation purposes. ImageNet encompasses 1,000 classes.

\paragraph{Data augmentation} The following data transformations are sequentially applied during pretraining. Due to variations in image sizes, they are first cropped to dimensions of $224 \times 224$.

\begin{itemize}
\item \texttt{RandomResizedCrop}: Randomly crop a patch of the image within the scale range of $(0.2, 1)$, then resize it to dimensions of $(224, 224)$.
\item \texttt{ColorJitter}: Change the image's brightness, contrast, saturation, and hue with strengths of $(0.4, 0.4, 0.4, 0.1)$ with a probability of $0.8$.
\item \texttt{RandomGrayscale}: Convert the image to grayscale with a probability of $0.2$.
\item \texttt{GaussianBlur}: Apply the Gaussian blur filter to the image with a radius sampled uniformly from the range $[0.1, 2]$ with a probability of $0.5$.
\item \texttt{RandomHorizontalFlip}: Horizontally flip the image with a probability of $0.5$.
\item \texttt{Normalize}: Normalize the image using a mean of $(0.485, 0.456, 0.406)$ and a standard deviation of $(0.229, 0.224, 0.225)$.
\end{itemize}

\paragraph{Network architecture} The encoder consists of a backbone followed by a projector. We employ ResNet-50 as the backbone and a three-layered fully-connected MLP as the projector. For the projector, the input and output dimensions of all layers are set to 2,048. Batch normalization \citep{ioffe2015batch} is applied to all layers, and the ReLU activation function is applied to the first two layers.

\paragraph{Pretraining configuration} We pretrain the encoder with a batch size of 512 for 100 epochs. We employ the SGD optimizer and set the momentum to 0.9, the learning rate to 0.1, and the weight decay rate to 0.0001. Additionally, we implement a cosine decay schedule for the learning rate, as proposed by \citet{loshchilov2016sgdr, chen2020simple}.

\paragraph{Evaluation configuration} After pretraining, we employ linear evaluation, which is the standard evaluation protocol. We take and freeze the pretrained backbone and attach a linear classifier on top. The linear classifier is then trained on the training set and evaluated on the test set. Training the linear classifier is conducted with a batch size of 4,096 for 90 epochs, utilizing the LARS optimizer \citep{you2017large}.

\subsubsection{Implementation details for Section \ref{subsec:data_augmentation}}
\label{subsubsec:details_data_augmentation}

To estimate the value of the prototype representation bias, for each $(x_{i}, y_{i})$ in the ImageNet training set $\mathcal{D}$, we sample $t_i$ from $T$ and $x'_i$ from $X|y_i$ and calculate the deviation $\| f_{\theta}(t_i(x'_i)) - f_{\theta}(t_i(x_i)) \|$. Then, we take the average over the entire $\mathcal{D}$ as follows:
\begin{equation}
    \frac{1}{| \mathcal{D} |} \sum\limits_{(x_i, y_i) \in \mathcal{D}} \| f_{\theta}(t_i(x'_i)) - f_{\theta}(t_i(x_i)) \|.
\end{equation}
So, we consider total 1,281,167 samples, which is equivalent to the number of images in the ImageNet training set.

\subsubsection{Implementation details for Section \ref{subsec:similarity_measure}}
\label{subsubsec:details_similarity_measure}

When normalization is not carried out, there is a risk of loss overflow, so we resort to using the log-sum-exp trick. It does not alter the values themselves.

\subsubsection{Implementation details for Section \ref{subsec:dataset}}
\label{subsubsec:details_dataset}

We use ImageNet-LT (ImageNet Long-Tailed) as a benchmark for imbalanced datasets. ImageNet-LT is a representative dataset specifically designed to address the challenges associated with imbalanced datasets. It is subsampled across the 1,000 classes of ImageNet, following a Pareto distribution with a shape parameter $\alpha$ of 6. The training set consists of 115,846 images, which is approximately 9$\%$ of the entire ImageNet training set. The class with the most images contains 1,280 images, while the class with the fewest has only 5 images. The test set is balanced, consisting of 50,000 images, with each class having exactly 50 images.

We construct ImageNet-Uni (ImageNet Uniform) as a subset of ImageNet to enable a fair comparison. We uniformly sample 115,846 images from the ImageNet training set, matching the size of the ImageNet-LT training set. The test set is configured to be identical to that of ImageNet-LT.


\subsection{Further experiments}
\label{subsec:further_experiments}

In this subsection, we provide additional experimental results. We include results on CIFAR-10 \citep{krizhevsky2009learning}. Note that, since CIFAR-10 contains 10 classes, the chance-level accuracy is 10$\%$. 


\subsubsection{Implementation details for CIFAR-10 experiments}


\paragraph{Dataset} The training set comprises 50,000 images, while the test set comprises 10,000 images. CIFAR-10 contains 10 classes, with all images standardized to a fixed size of $32 \times 32$.

\paragraph{Data augmentation} The following data transformations are sequentially applied during pretraining.

\begin{itemize}
\item \texttt{RandomResizedCrop}: Randomly crop a patch of the image within the scale range of $(0.08, 1)$, then resize it to dimensions of $(32, 32)$.
\item \texttt{RandomHorizontalFlip}: Horizontally flip the image with a probability of $0.5$.
\item \texttt{ColorJitter}: Change the image's brightness, contrast, saturation, and hue with strengths of $(0.4, 0.4, 0.4, 0.1)$ with a probability of $0.8$.
\item \texttt{RandomGrayscale}: Convert the image to grayscale with a probability of $0.2$.
\item \texttt{Normalize}: Normalize the image using a mean of $(0.485, 0.456, 0.406)$ and a standard deviation of $(0.229, 0.224, 0.225)$.
\end{itemize}

\paragraph{Network architecture} The encoder consists of a backbone followed by a projector. We employ a variant of ResNet-18 for CIFAR-10 as the backbone and a two-layered fully-connected MLP as the projector. For the projector, the input and output dimensions of the first layer and 512 and 2,048, respectively, and the input and output dimensions of the second layer are 2,048. Batch normalization is applied to all layers, and the ReLU activation function is applied to the first layer.

\paragraph{Pretraining configuration} We pretrain the encoder with 512 batch size for 200 epochs. We employ the SGD optimizer and set the momentum to 0.9, the learning rate to 0.1, and the weight decay rate to 0.0001.

\paragraph{Evaluation configuration} We train the linear classifier with a batch size of 256 for 90 epochs using SGD with momentum 0.9 and learning rate 30, and apply a cosine decay schedule.

\subsubsection{Standard evaluations}

Table \ref{tab:standard_evaluations} presents a set of standard evaluations. Error bars, represented as the mean $\pm$ standard deviation, are reported based on five independent runs. We choose $(\alpha, \lambda)$ as $(4, 2)$ and $(2, 4)$ for ImageNet and CIFAR-10, respectively. We also include $k$-nearest neighbors evaluation. Specifically, we retrieve the $k$ nearest training image representations for a given test image representation. Their respective labels are aggregated using a majority voting process to predict the label for the test image. In ImageNet experiments, $k$ is set to 200, whereas in CIFAR-10 experiments, $k$ is set to 1.

\begin{table}[t]
  \caption{Standard evaluations. We report top-1 accuracy on CIFAR-10 and ImageNet using two standard evaluation protocols: $k$-nearest neighbor and linear evaluation. Each result is presented as the mean $\pm$ standard deviation over 5 runs.}
  \label{tab:standard_evaluations}
  \centering
  \begin{tabular}{lcc}
    Dataset & \multicolumn{2}{c}{Protocol}\\
    \cmidrule(lr){2-3}
     & $k$-NN & Linear eval. \\
    \midrule
    CIFAR-10 & 80.32 $\pm$ 0.32 & 86.08 $\pm$ 0.07 \\
    \midrule
    ImageNet & 51.00 $\pm$ 0.22 & 67.40 $\pm$ 0.07 \\
  \end{tabular}
\end{table}

\subsubsection{Impact of balancing parameters on CIFAR-10}

As in Section \ref{sec:empirical_study}, Figure \ref{fig:balancing_parameters_cifar10} shows that, balancing between the attracting component and the repelling component is important using balancing parameters $\alpha$ and $\lambda$.

\vspace{-1em}

\begin{figure*}[t]
    \centering
    \begin{subfigure}{0.45\textwidth}
        \includegraphics[width=\linewidth]{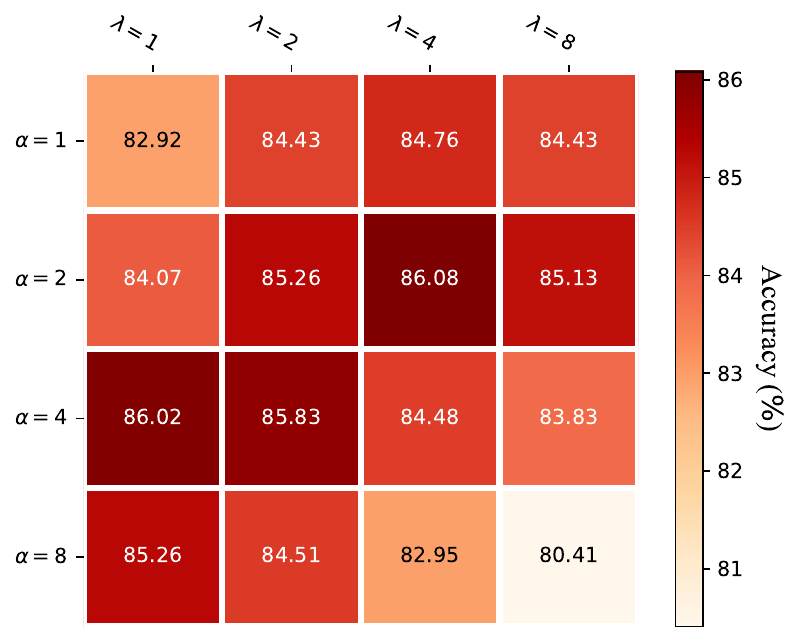}
        \caption{Balanced contrastive loss}
        \label{fig:subfig1_cifar10}
    \end{subfigure}
    \begin{subfigure}{0.45\textwidth}
        \includegraphics[width=\linewidth]{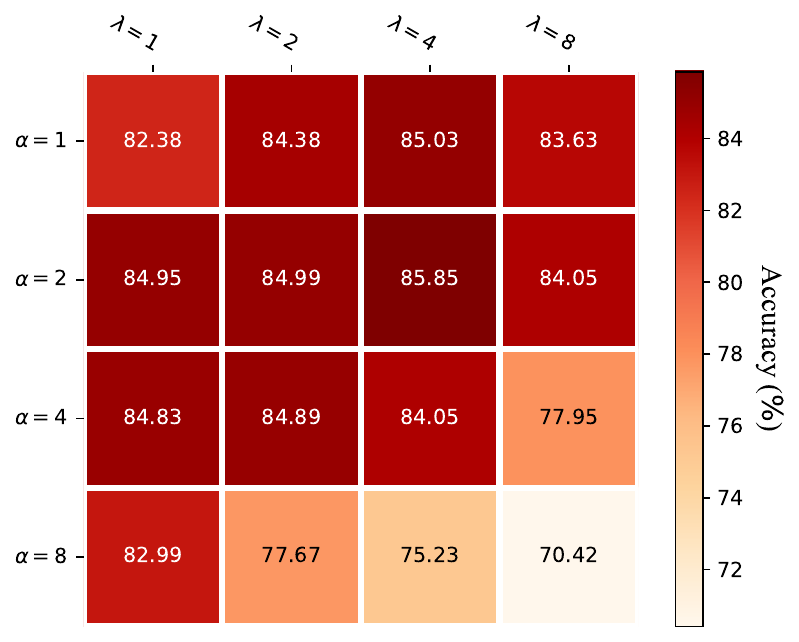}
        \caption{Generalized NT-Xent loss}
        \label{fig:subfig2_cifar10}
    \end{subfigure}
    \caption{\textbf{Impact of balancing parameters $\alpha$ and $\lambda$ on CIFAR-10.} Better balancing can be accomplished through the adjustments of the balancing parameters.}
    \label{fig:balancing_parameters_cifar10}
\end{figure*}

\subsubsection{Impact of data imbalance on the balanced contrastive loss}
\label{subsec:dataset_2}

As an extension of Section \ref{subsec:dataset}, we investigate the impact of data imbalance on the balanced contrastive loss in Table \ref{tab:class_distribution_2}. We adopt the balancing parameters $\alpha=2$ and $\lambda=1$ for comparison, as the SimCLR loss is equivalent to the generalized NT-Xent loss under this setting. Compared to SimCLR, the balanced contrastive loss exhibits relatively improved performance. Nevertheless, similar to SimCLR, performance is higher when the class distribution is balanced. This observation aligns well with our theoretical framework, which assumes uniformity in class distribution.

\begin{table}[t]
  \caption{Comparison of class distributions under balanced contrastive loss. The results show that the uniform class distribution leads to better performance.}
  \label{tab:class_distribution_2}
  \centering
  \begin{tabular}{cc}
    \multicolumn{2}{c}{Class distribution} \\
    \cmidrule(lr){1-2}
    Uniform & Long-tailed \\
    \midrule
    21.24   & 15.01   \\
  \end{tabular}
\end{table}

\subsubsection{Tightness of the upper bounds}
\label{subsec:tightness}

We quantify the tightness of our upper bounds in Equation~(\ref{eq:attracting}) and Equation~(\ref{eq:repelling}) by measuring the per-sample gap $\Delta = \text{RHS} - \text{LHS}$ and reporting the mean across the dataset. We train on CIFAR-10 and store checkpoints every 10 epochs. For each checkpoint we evaluate: (i) the attracting bound using \(K=10\) Monte Carlo samples of \(T\) to approximate \(\mathbb{E}_{T}\); (ii) the repelling bound using \(K=1\) draw of \(T'\) and a memory bank of \(M=50{,}000\) negatives (all training images) to approximate \(\mathbb{E}_{T'}\) and \(\mathbb{E}_{X'}\), respectively.

Figure~\ref{fig:tightness} shows the epoch-wise mean gap for the attracting and repelling components, respectively. Both gaps decrease and then stabilize at a small value, indicating that the bounds become tight as training progresses. The repelling bound shows a consistently larger mean gap than the attracting bound across epochs.

\paragraph{Equality conditions.} For the attracting bound, the proof of Equation~(\ref{eq:attracting}) has slack only from Jensen’s inequality on the norm: $\|\mathbb{E}_{T} f_\theta(T(x))\| \le \mathbb{E}_{T} \|f_\theta(T(x))\| = 1$. Hence, the equality holds when $\|\mathbb{E}_{T} f_\theta(T(x))\|=1$, i.e., all augmented views of the same image map to the same unit vector (view-invariance).

For the repelling bound, the proof of Equation~(\ref{eq:repelling}) uses several inequalities. In practice, tightness is approached when similarities \(s(f_\theta(t(x)),f_\theta(T'(X')))\) vary little across \(T'\) and \(X'\), so moving expectations through \(\exp\) and \(\log\) adds negligible slack. It is also approached when a single negative class dominates or \(\alpha\) is large, making \(\tfrac{1}{\alpha}\log\!\sum\exp(\alpha\,\cdot)\approx\max\).

\begin{figure*}[t]
    \centering
    \begin{subfigure}{0.45\textwidth}
        \includegraphics[width=\linewidth]{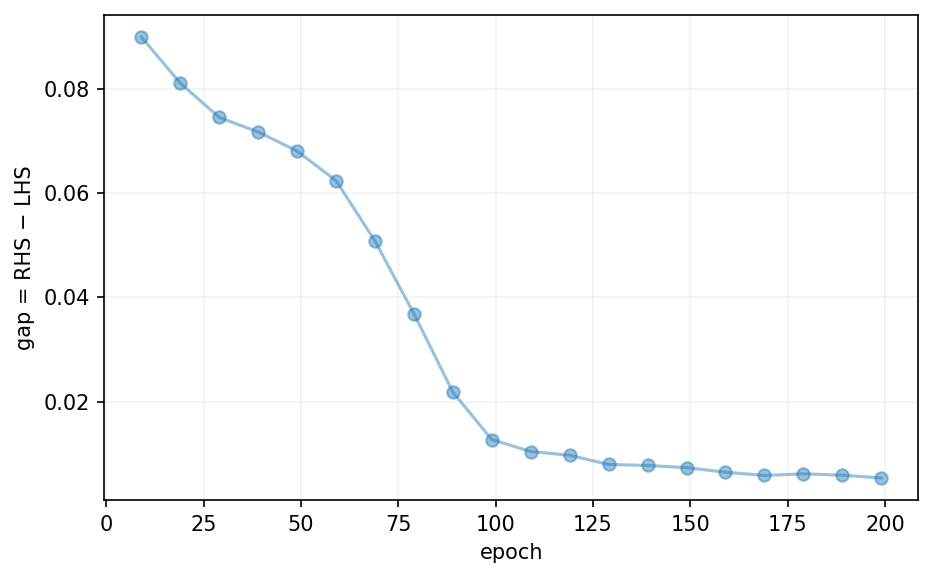}
        \caption{Attracting bound (Equation (\ref{eq:attracting}))}
        \label{fig:tightness_attr}
    \end{subfigure}
    \begin{subfigure}{0.45\textwidth}
        \includegraphics[width=\linewidth]{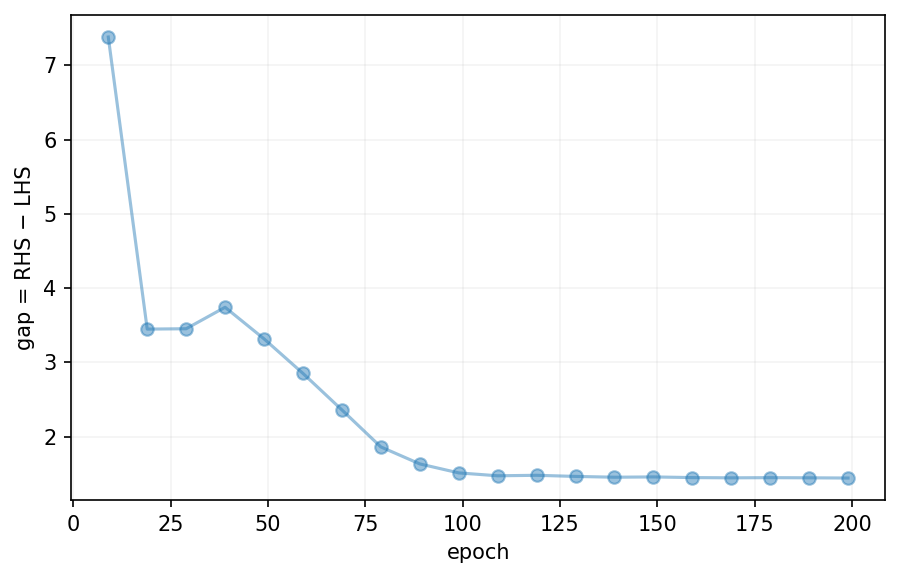}
        \caption{Repelling bound (Equation (\ref{eq:repelling}))}
        \label{fig:tightness_repel}
    \end{subfigure}
    \caption{\textbf{Bound tightness}: mean gap $\Delta=\mathrm{RHS}-\mathrm{LHS}$ over training checkpoints. Lower is tighter.}
    \label{fig:tightness}
\end{figure*}



\end{document}